\renewcommand{\raggedright}{\leftskip=0pt \rightskip=0pt plus 0cm}
\renewcommand{\raggedright}{\leftskip=0pt \rightskip=0pt plus 0cm}
\newcommand{\etal}{\emph{et~al.~}}
\newtheorem{lemm}{Lemma}
\newtheorem{remark}{Remark}
\newtheorem{proposition}{Proposition}
\newcommand{\Rmnum}[1]{\expandafter\@slowromancap\romannumeral #1@}
\begin{document}

	\title{Deep Reinforcement Learning for Traffic Light Control in Intelligent Transportation Systems}

	\author{Ming~Zhu$^+$$^*$,~\IEEEmembership{Member,~IEEE},
		Xiao-Yang~Liu$^+$,~\IEEEmembership{Member,~IEEE},
		Sem Borst,~\IEEEmembership{Member,~IEEE},
		\\and Anwar Walid,~\IEEEmembership{Fellow,~IEEE}
		\thanks{$^+$Equal contribution.}
		\thanks{$^*$Corresponding author.}

		\thanks{M.~Zhu is with Institute of Automation, Chinese Academy of Sciences, and University of Chinese Academy of Sciences,
		Beijing, China. E-mail: \nohyphens{zhumingpassional@gmail.com}.}
		\thanks{M.~Zhu was supported by National Natural Science Foundations of China (Grant No. 61902387).}
		\thanks{X.-Y.~Liu is with the Department of Electrical Engineering, Columbia University, New York, NY 10027, USA E-mail: \{xl2427\}@columbia.edu}
		\thanks{S.~Borst and A.~Walid are with Nokia Bell Labs, sem.borst@outlook.com, anwar.walid@nokia-bell-labs.com.}
	}

	\markboth{IEEE Transactions on Network Science and Engineering,~Vol.~xx, No.~xx, xxx~2025}%
	{Shell \MakeLowercase{\textit{et al.}}: Bare Demo of IEEEtran.cls for Journals}

	\maketitle

	\begin{abstract}

		Smart traffic lights in intelligent transportation systems (ITSs) are envisioned to greatly increase traffic efficiency and reduce congestion. Deep reinforcement learning (DRL) is a promising approach to adaptively control traffic lights based on the real-time traffic situation in a road network. However, conventional methods may suffer from poor scalability. In this paper, we investigate deep reinforcement learning to control traffic lights, and both theoretical analysis and numerical experiments show that the intelligent behavior ``greenwave" (i.e., a vehicle will see a progressive cascade of green lights, and not have to brake at any intersection) emerges naturally a grid road network, which is proved to be the optimal policy in an avenue with multiple cross streets. As a first step, we use two DRL algorithms for the traffic light control problems in two scenarios. In a single road intersection, we verify that the deep Q-network (DQN) algorithm delivers a thresholding policy; and in a grid road network, we adopt the deep deterministic policy gradient (DDPG) algorithm. Secondly, numerical experiments show that the DQN algorithm delivers the optimal control, and the DDPG algorithm with \textit{passive observations} has the capability to produce on its own a high-level intelligent behavior in a grid road network, namely, the ``greenwave" policy emerges. We also verify the ``greenwave" patterns in a $5 \times 10$ grid road network. Thirdly, the ``greenwave" patterns demonstrate that DRL algorithms produce favorable solutions since the ``greenwave" policy shown in experiment results is proved to be optimal in a specified traffic model (an avenue with multiple cross streets). The delivered policies both in a single road intersection and a grid road network demonstrate the scalability of DRL algorithms.

	\end{abstract}

	\begin{IEEEkeywords}
		Traffic light control, intelligent transportation systems, Markov Decision Process (MDP), deep reinforcement learning, scalability, greenwave, high-level intelligent behavior.
	\end{IEEEkeywords}

	\IEEEpeerreviewmaketitle

	\section{Introduction}

	Smart traffic lights play an important role in intelligent transportation systems (ITSs) \cite{hong2022traffic_signal, devailly2021RL, vazifeh2018addressing,  zhu2016PublicVehicle}, since they may greatly increase traffic efficiency and reduce congestion. According to a report \cite{2021TrafficCongestionEconomy}, the annual cost caused by traffic congestion in the US has grown from \$75 billion in 2000 to \$179 billion in 2017. Vehicles' energy consumption and emission at road intersections are highly related to the control policies of traffic lights \cite{tang2017fuel_consumption}. Deep reinforcement learning (DRL) \cite{liu2020RLsurvey, liang2019DQN_traffic_light, mao2021RLTheory} is a promising approach to adaptively adjust the control policies of traffic lights according to the real-time traffic situation, e.g., the traffic throughput in each road.



	Conventional methods for traffic light control may suffer from poor scalability. 1) Classic optimization based methods \cite{koonce2008webster, roess2004greenwave, little1981maxband, cools2013SOTL, varaiya2013max_peessure, lowrie1990scats, lammer2008self, lammer2010self, albatish2019TrafficLightRule, hofri1987optimal} make decisions based on parameters calculated from traffic data, and heuristic rules obtained from expert knowledge. The number of parameters is also large if the road network is large, requiring a large amount of computing resource and leading to poor scalability. These heuristic rules are based on experiences, and may not cover all cases in complicated traffic scenarios, lacking flexibility. For example, in Maxband \cite{little1981maxband}, all intersections should share the same cycle length, which is the maximum cycle lengths of all intersections; and in Max-pressue \cite{varaiya2013max_peessure}, the pressure (the total queue length on incoming approaches minus the outgoing approaches) is only calculated when the current phase time is larger than the minimum phase duration length, which is set by expert knowledge. 2) The problem of adaptive traffic light control can be formulated as a Markov Decision Process (MDP) \cite{onori2016dynamic, puterman2014markov, ross2014introduction}, since the effects of a control action depend only on that state and not on the prior history. Conventional MDP-based methods such as value iteration, policy iteration and dynamic programming can obtain the optimal policy for relatively small-scale problems. However, the state-action space grows exponentially with the number of road intersections, becoming infeasible in a large road network. 3) Some DRL algorithms such as the deep Q-network (DQN) algorithm are not scalable either. The actions in the DQN algorithm are discrete. When the number of road intersections increases \cite{liang2019DQN_traffic_light, kumar2020traffic_light_fuzzy_RL}, the number of Q-values to be estimated increases exponentially. It is not practical to approximate such a large number of Q-values. Therefore, some DRL algorithms such as DQN may not produce a feasible solution in a large road network.

	Deep reinforcement learning algorithms \cite{haydari2020DRL_survey, mnih2015human} may not obtain a favorable solution in traffic light control. Neural networks are used to approximate the Q-function \cite{tan2019DRL_traffic_signal} and/or policy; however, they are black-box and have poor interpretation. Neural networks receive states (e.g., the number of waiting cars in each road, and the traffic light configure) and actions (e.g., ``continuing" or ``switching"), and produce the estimated Q-values without offering any interpretation. There is no guarantee for DRL to which stochastic gradient descent methods converge. In another word, whether the output action produces the maximum Q-value over all actions in a state is not guaranteed. Moreover, over-fitting or under-fitting may occur. Therefore, the estimated Q-values may not be accurate, and the output action may not produce the maximum Q-value.

	In this paper, we investigate deep reinforcement learning approaches to control traffic lights, and analyze their performance. In a single road intersection, we demonstrate that the policies obtained by the DQN algorithm matches the optimal control obtained by a conventional MDP approach. In a grid road network, we adopt the deep deterministic policy gradient (DDPG) algorithm; and it is not practical to obtain the optimal policy by conventional MDP approaches since the state-action space is too large; and we prove that the intelligent behavior ``greenwave" (i.e., a vehicle will see a progressive cascade of green lights, and does not have to brake at any intersection) is theoretically optimal in a specified traffic model. Numerical results show that the ``greenwave" policy emerges by applying the DDPG algorithm, confirming its ability to produce the high-level intelligent behavior.


	Our main contributions can be summarized as follows:
	\begin{itemize}
		\item We use two DRL algorithms for the traffic light control problems in two scenarios. In a single road intersection, we verify that the deep Q-network (DQN) algorithm delivers a thresholding policy, which matches the optimal control; and in a grid road network, we adopt the deep deterministic policy gradient (DDPG) algorithm.

		\item Numerical experiments show that the DDPG algorithm with \textit{passive observations} has the capability to produce on its own a high-level intelligent behavior in a grid road network, namely, the ``greenwave" policy emerges. We also examine the ``greenwave" patterns.

		\item The ``greenwave" patterns delivered by the DDPG algorithm demonstrate that DRL algorithms produce favorable solutions since the ``greenwave" policy is optimal in a specified traffic model (an avenue with multiple cross streets). The policies both in a single road intersection and a grid road network demonstrate the scalability of DRL algorithms in solving traffic light control problems.

	\end{itemize}

	\begin{figure*}[t]
		\centering
		\includegraphics[width=0.28\textwidth]{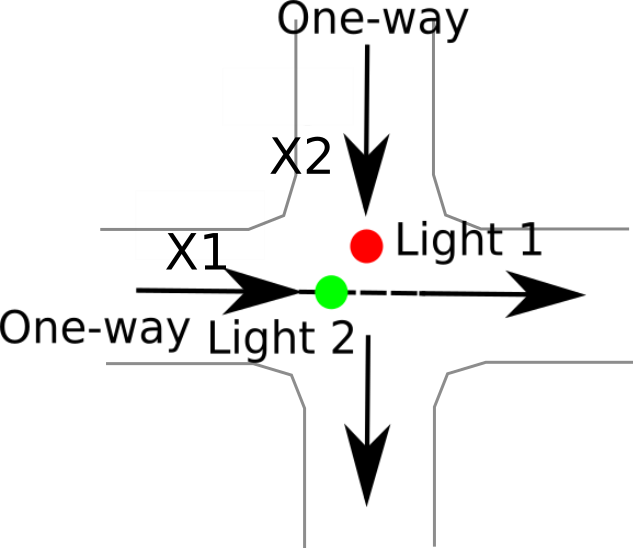}
		\hspace{0.70in}
		\includegraphics[width=0.41\textwidth]{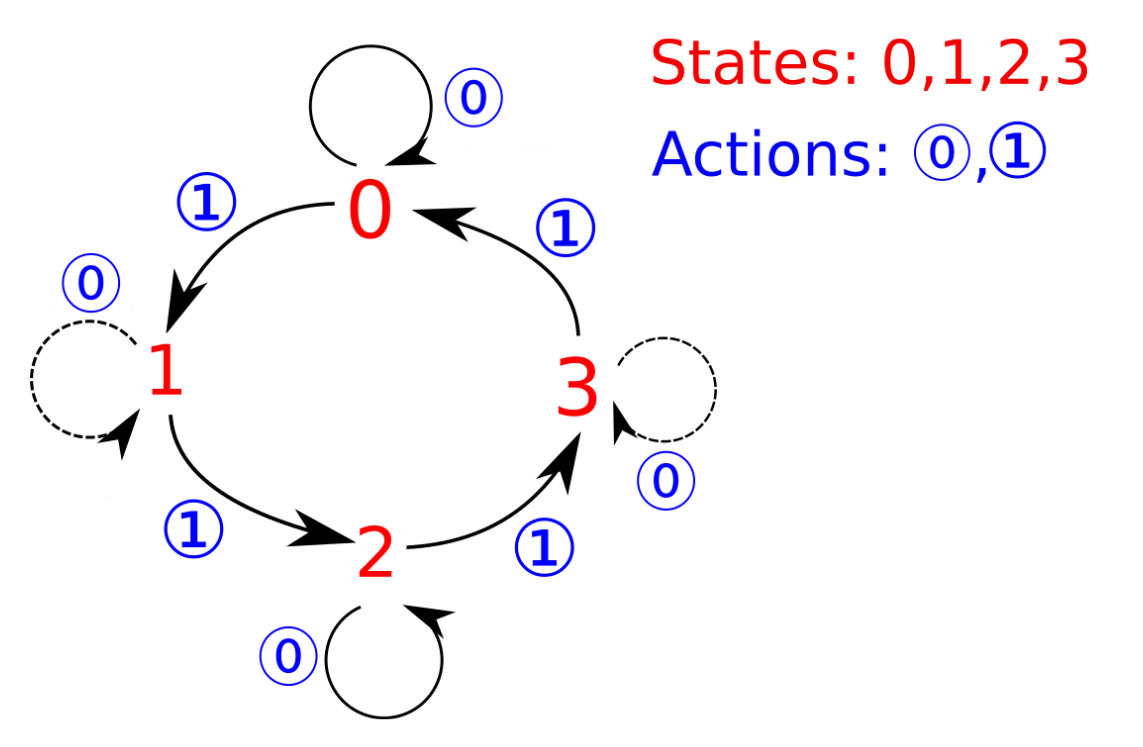}
		\caption{An intersection with two traffic flows (left), and the corresponding state-transition diagram (right).}
		\label{Fig:OneIntersectionAndStateTransition}
	\end{figure*}

	The remainder of the paper is organized as follows. Section~\ref{Sec:RelatedWorks} describes the related works. Section~\ref{Sec:Model} gives a detailed description of the road intersection model and the problem statement. Section~\ref{Sec:Solution} provides specification of DRL algorithms for a single intersection as well as a grid road network. Section~\ref{Sec:Performance} evaluates the performance of the proposed DRL algorithms and illustrates the emergence of ``greenwave" patterns. Section~\ref{Sec:Conclusion} concludes this paper.


	\section{Related Works} \label{Sec:RelatedWorks}

	Existing works on traffic light control can be classified into two categories, conventional methods and DRL methods.

	\subsection{Conventional Methods}


	Traffic light control is in fact a mixed-integer linear programming problem \cite{lin2004MILP_TrafficSignalControl} that is NP-hard. Traffic light control heavily relies on accurate traffic parameters such as the vehicles' position and speed. Conventional techniques measure them using sensors \cite{rani2017TrafficIR_IoT, kodire2016TrafficSignalGPS_ZigBee, panichpapiboon2017TrafficDensitySmartphone}, e.g., infrared sensors and inductive loop detectors. To collect the traffic data and compute policies for plenty of intersections in real-time, high data transmission rate and enough computing resource are required. Techniques such as wireless communications \cite{zhu2021RL_UAV, ma2019high} and edge cloud computing \cite{wu2020collaborate} can significantly improve the data transmission rate and computing efficiency. Conventional methods mainly include classic optimization based methods \cite{koonce2008webster, roess2004greenwave, little1981maxband, cools2013SOTL, varaiya2013max_peessure, lowrie1990scats,  lammer2008self, lammer2010self, albatish2019TrafficLightRule, hofri1987optimal}, evolutionary algorithms \cite{wang2021GA_traffic_signal, turky2009GA, brian2009stochastic}, and fuzzy logic \cite{ali2021FuzzyTrafficSignal, hawi2017fuzzy}.



	Classic optimization based methods \cite{koonce2008webster, roess2004greenwave, little1981maxband, cools2013SOTL, varaiya2013max_peessure, lowrie1990scats,  lammer2008self, lammer2010self, albatish2019TrafficLightRule, hofri1987optimal} usually adjust the cycle time and the phase splits based on parameters and heuristic rules. The time for green light is usually longer than needed, which causes a large amount of wasted time for waiting cars or pedestrians.

	\begin{itemize}
		\item Methods for single road intersection. The Webster method calculates the desired cycle length, and then calculates the green time splits, which are proportional to the ratios of critical lane volumes over each phase. Hofri \etal \cite{hofri1987optimal} proposed a thresholding policy for a single road intersection with two queues, which depends on the traffic flow data and simulation experiments; however, it may not be available in all real traffic situations.

		\item Methods for multiple road intersections. Both GreenWave \cite{roess2004greenwave} and Maxband \cite{little1981maxband} methods aim to reduce the number of stops for vehicles; however, GreenWave only optimizes only one direction, and Maxband optimizes two opposite diretions. Both GreenWave and Maxband require the same cycle length for all intersections. In the self-organizing traffic light (SOTL) \cite{cools2013SOTL} method, the requests for a green signal from the current phase and other competing phases are measured, and then certain rules are built to make decisions (``continuing" or ``switching"). Its performance is highly dependent on the rules. In the Max-pressure \cite{varaiya2013max_peessure} method, the pressure of all phases is calculated, and then choose the phase with the maximum pressure. In the Sydney coordinated adaptive traffic system (SCATS) \cite{lowrie1990scats} method, the degree of saturation (DS), i.e., the ratio of effective green time to the available green time, for each phase in the current signal plan is calculated, and then the DSs for other signal plans are calculated, and finally the plan with the minimum total DS is selected. To reduce the large fluctuations of traffic flows, Lammer \etal \cite{lammer2010self} proposed a method to adjust the duration and the order of green phases, which reaches the stabilization of queues and red-time duration.
	\end{itemize}
	These methods have poor scalability. If the road network is large, the number of parameters is also large, leading to much time spent on parameter calculating and tuning. Moreover, heuristic rules and some parameters are highly based on expert knowledge without optimality gurantee.



	Evolutionary algorithms \cite{shaikh2020EA, maier2019EA} especially genetic algorithms are usually used in multi-objective optimization, e.g., fuel consumption, number of stops, traffic flow throughput, and vehicles' waiting time \cite{wang2021GA_traffic_signal, turky2009GA, brian2009stochastic}. In theory, these algorithms are based on the concept of survival of the fittest through stochastic optimization and heuristics. Take genetic algorithms as an example. The solutions are encoded as chromosomes, and in the next generation, through reproduction, crossover, and a very small probability of mutation, new offsprings are generated, and the ones with large fit values will survive. In this way, in each generation or iteration, the solutions move nearer to the local or even global optimal points. Brian \etal \cite{brian2009stochastic} proposed a genetic algorithm to minimize fuel consumption, emission, delay and number of stops. Turky \etal \cite{turky2009GA} used a genetic algorithm to dynamically control the red and green duration to optimize the vehicle and the pedestrian flow throughput. However, the performance of genetic algorithms may depend on some parameters such as the population size, the number of generations, and the mutation probability, and there is no performance guarantee. For example, genetic algorithms may not obtain successful solutions if the population size is small. Therefore, evolutionary algorithms may obtain an unfavorable solution within limited time.


	Fuzzy logic \cite{ali2021FuzzyTrafficSignal, hawi2017fuzzy} can be used in traffic light control problems, since the ambiguity in the linguistic terms (e.g., low-, medium- and high-speed) can be represented using fuzzy sets. To compensate the traffic flow fluctuation, Ali \etal \cite{ali2021FuzzyTrafficSignal} used fuzzy logic to monitor and handle the alternation of the traffic condition between two successive cycles. Hawi \etal \cite{hawi2017fuzzy} used fuzzy logic and wireless sensor networks to control traffic lights, where wireless sensor networks collect traffic data in real-time and fuzzy logic outputs the order of the green light assignment with traffic quantity and waiting time as the input. In the above methods, designing the fuzzy logic needs high human expertise and regular updating of rules, and generally accurate reasoning cannot be given. Moreover, the performance depends on the heuristic rules and parameters, and there is no guarantee of stability or optimality.


	\subsection{Deep Reinforcement Learning Methods}

	DRL methods have various formulations, including state, action, and reward, leading to various results. Based on formulations, DRL algorithms are applied, and these algorithms are classified into single-agent and multi-agent ones.

	First, we discuss DRL formulations.
	\begin{itemize}
		\item State formulation. The state is generally the combinations of the following items: queue length \cite{wei2018intellilightRL}, waiting time \cite{chu2019MARL}, volume \cite{balaji2010urban} (number of vehicles on the lane/road), delay \cite{arel2010MARL} (realistic travel time minus the expected travel time), speed, phase duration (of the current phase) \cite{mannion2016RL}, congestion \cite{bakker2010MARL} (denoted by 0/1 or the congestion level), and image or matrix \cite{mousavi2017RL, wei2018intellilight} (demonstrating positions of vehicles, where ``0" denotes absence and ``1" denotes presence), etc. There is a trend of using image/matrix as the state recent years; however, the dimension of state is very large, e.g., more than one thousand, which leads to long training time.

		\item Action formulation. The action generally has four types: set current phase duration \cite{aslani2017adaptive}, set phase ratio \cite{abdoos2011MAQLearning}, continuing/switching \cite{van2016RL}, and select the next phase \cite{zang2020metalight}. The former two types of actions are continuous, and the latter two ones are discrete. Selecting the next phase is more flexible since the traffic light does not need to change cyclically. Here, we select continuing/switching, since it is comprehensive.

		\item Reward formulation. The action generally includes the following items: queue length \cite{wei2018intellilightRL}, waiting time \cite{van2016RL}, throughput \cite{aslani2017adaptive}, speed \cite{van2016RL}, and pressure \cite{chen2020DQN_traffic_light} (the number of incoming vehicles minus the outgoing ones). Some research works use a weighted linear combination of the above items as reward. The tuning of weights is an important issue, and minor changes for weights may lead to very different results. To avoid this case, we only select queue length as the reward.
	\end{itemize}

	Secondly, we discribe single-agent and multi-agent DRL approaches. In single-agent DRL approaches, the agent learns the Q-value for all actions of each state, and chooses the action with maximum approximated Q-value on that state. Through a bootstrap learning process, the agent obtains a policy for an end-to-end task with raw input data. These methods are only applied in small or medium-sized road networks due to poor scalability. DQN and its variants \cite{mnih2015human, liang2019DQN_traffic_light} are commonly used methods. Liang \etal \cite{liang2019DQN_traffic_light} proposed to use the dueling double DQN with prioritized experience replay to decide the traffic signal duration. The single-agent DRL methods have a global view of the traffic situations; however, they may not be practical in very large-scale road networks.

	In multi-agent DRL approaches \cite{Wu2020MARDDPG, chu2019MARL, xu2019TrafficSignalMARL, liu2014TrafficSignalMARL, liu2017TrafficSignalMARL, zhang2021MARL}, each agent controls a traffic light in an intersection, and also considers the states and actions of its neighborhoods to cooperate with them. The sum of weighted reward of neighbor agents plus its own is then treated as a global reward. Liu \etal \cite{liu2014TrafficSignalMARL} proposed a cooperative Q-learning with a function approximation algorithm so that each local intersection can cooperatively make decisions without any central supervising agents. Chu \etal \cite{chu2019MARL} proposed an actor-critic based multi-agent DRL method to stabilize the learning procedure, improve the observability and reduce the learning difficulty. Xu \etal \cite{xu2019TrafficSignalMARL} proposed a multi-agent DRL approach to optimize the signal timing plans at intersections. Wu \etal \cite{Wu2020MARDDPG} proposed a multi-agent DDPG method for traffic light control, where long short-term memory (LSTM) \cite{yu2019LSTM} is used to improve the stability of the environment caused by partial observable state. Multi-agent DRL methods can control traffic lights in a cooperative way among agents in a very large-scale road network. However, these methods generally may not have a global view of the traffic situations needed for the favorable control policy. Moreover, how to balance the utilities of an agent and its neighborhoods is also a hard problem.



	The above DRL algorithms use the black-box neural networks to fit the Q-function, and stochastic gradient descent methods to minimize the non-convex loss function. These methods cannot provide guarantee the scalablity. In DQN methods, the number of approximated Q-values under each state over all actions increases exponentially with the increasing of the number of road intersections. The DQN algorithm needs too much computing resource and time to estimate the Q-values in a large road network, leading to the poor scalability. Multi-agent DRL methods such as multi-agent DDPG \cite{Wu2020MARDDPG} also may not guarantee for the scalability, since they are black-box and the optimal policy may not be obtained.



	Different from the existing methods, we apply DRL methods, namely, DQN and DDPG, to optimize the control policy for traffic lights in a single road intersection and a grid road network, respectively. Considering that the DQN algorithm has poor scalability, it is only used in a single road intersection. Considering that the DDPG algorithm can handle complicated scenarios using continuous actions, it is used in a grid road network. We obtain the optimal policy in a single road intersection using a conventional method, and obtains the optimal policy in a grid road network under specified traffic model, i.e., ``greenwave" policy. Simulation results show that the DQN algorithm delivers the optimal control, and the proposed DDPG algorithm produces a high-level intelligent behavior in real-time, i.e., the ``greenwave" pattern emerges.

	\section{Model Description and Problem Formulation} \label{Sec:Model}

	In this section, we present the models and optimization goals for single and multiple traffic lights. For the ease of exposition, we consider an admittedly stylized model which captures the most essential features governing the dynamics of traffic flows at road intersections. Our central aim is to broadly explore the performance and scalability of DRL algorithms in optimizing real-time control policies, rather than to develop a practical policy for a specific instance. Thus, we adopt a discrete-time formulation to simplify the description and allow conventional MDP techniques for comparison. The methods and results can naturally extend to continuous-time operation.

	\subsection{Model for Single Traffic Light} \label{Subsec:SingleRoadIntersection}

	\begin{figure*}[t]
		\centering
		\includegraphics[width=0.71\textwidth]{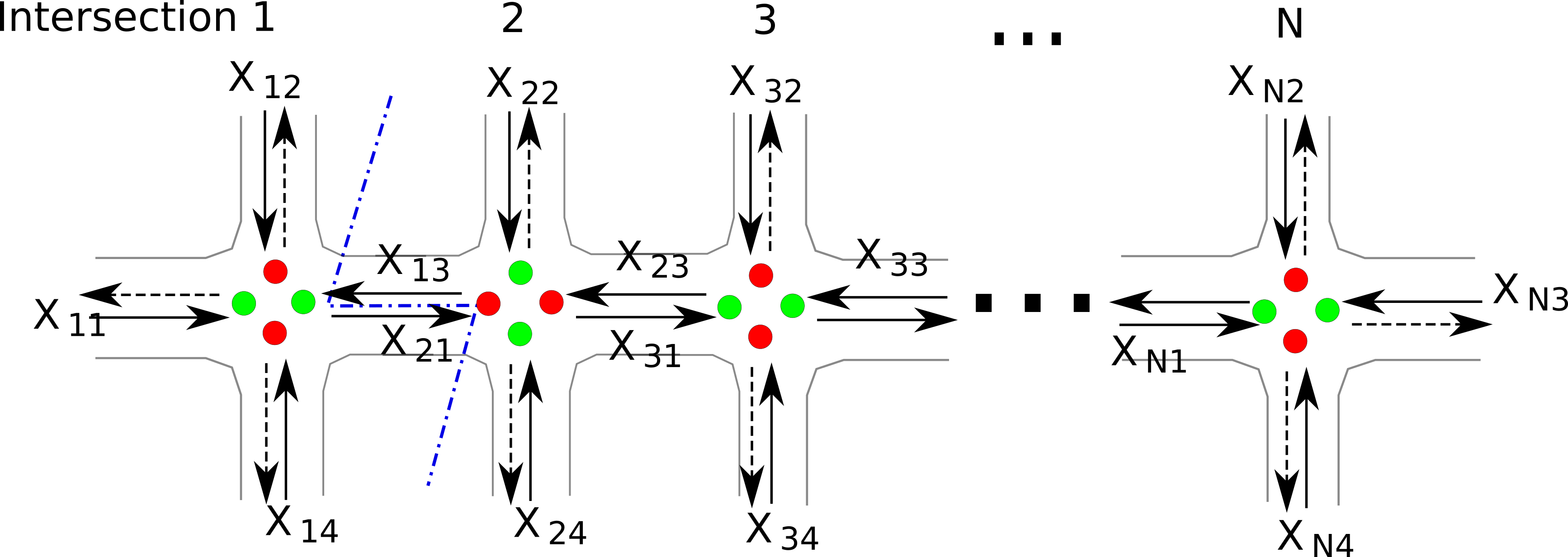}
		\caption{Illustration of a road network with grid topology.}
		\label{Fig:LinearRoadNetwork}
	\end{figure*}

	We start with a single intersection scenario to facilitate the validation of DRL algorithms. We consider the simplest meaningful setup with two intersecting unidirectional traffic flows as schematically depicted in the left part of  Fig.~\ref{Fig:OneIntersectionAndStateTransition}. We assume two directions with 1 and 2 denoting a west-to-east direction (avenue) and a north-to-south direction (cross street), respectively. Therefore, there are two traffic flows denoted by 1 and 2. The state $S(t)$ at the beginning of time slot~$t$ may be described by the three-tuple $(X_1(t), X_2(t); L(t))$, with $X_i(t), ~i \in\{ 1, 2 \}$ denoting the number of vehicles of traffic flow~$i$ waiting to pass the intersection and $L(t) \in \{0, 1, 2, 3\}$ denoting the configuration of the traffic lights:
	\begin{itemize}
		\item ``0", i.e., `green': green light for flow~$1$ (avenue) and hence red light for flow~$2$ (cross street);
		\item ``1", i.e., `yellow': yellow light for flow~$1$ (avenue) and hence red light for flow~$2$ (cross street);
		\item ``2", i.e., `red': red light for flow~$1$ (avenue) and hence green light for flow~$2$ (cross street);
		\item ``3", i.e., `orange': red light for flow~$1$ (avenue) and hence yellow light for flow~$2$ (cross street).
	\end{itemize}

	Each configuration~$L(t)$ can either be simply continued in the next time slot or otherwise be switched to the natural subsequent configuration $(L(t) + 1) \mod 4$. This is determined by the action $A(t) \in \{0, 1\}$ selected at the end of time slot~$t$, which is represented by a binary variable: ``0" for continuing, and ``1" for switching. Then we have
	\begin{equation} \label{Eqn:LightChangeOneIntersection}
	L(t + 1) = (L(t) + A(t)) \mod 4.
	\end{equation}
	These rules give rise to a strictly cyclic control sequence as illustrated in the right part of Fig.~\ref{Fig:OneIntersectionAndStateTransition}.

	The evolution of the queue state over time is governed by the recursion
	\begin{equation}
		\begin{split}
			& (X_1(t+1), X_2(t+1)) \\
			&= (X_1(t) + C_1(t) - D_1(t), X_2(t) + C_2(t) - D_2(t)),
			\label{Eqn:QueueStateOneIntersection}
		\end{split}
	\end{equation}
	with $C_i(t)$ denoting the number of vehicles of traffic flow~$i$ appearing at the intersection during time slot~$t$ and $D_i(t)$ denoting the number of departing vehicles of traffic flow~$i$ crossing the intersection during time slot~$t$. We make the simplified assumption that if a traffic flow is granted the green light, exactly one waiting vehicle, if any, will cross the intersection during that time slot, i.e.,
	\begin{equation}
		\begin{split}
			D_1(t) &= \min\{1, X_1(t)\},~~\text{if}~ L(t) = 0,\\
			D_1(t) &= 0, ~~~~~~~~~~~~~~~~~~\text{if}~  L(t) \neq 0; \\
			D_2(t) &= \min\{1, X_2(t)\},~~\text{if}~ L(t) = 2, \\
			D_2(t) &= 0, ~~~~~~~~~~~~~~~~~~\text{if}~  L(t) \neq 2.
			\label{Eqn:DepartingFlowOneIntersection}
		\end{split}
	\end{equation}

	\subsection{Model for Multiple Traffic Lights}
	\label{Subsec:LinearGridRoad}

	To examine the scalability property of DRL algorithms in more complex scenarios, we consider a grid road network as depicted in Fig.~\ref{Fig:LinearRoadNetwork}. Specifically, we investigate a grid road network with $N$ intersections and bidirectional traffic flows, representing an avenue with multiple cross streets. We do not account for any traffic flows making left or right turns, since the model could be easily generalized to accommodate it. The state $S(t)$ at the beginning of time slot~$t$ may be described by a $(5 \, N)$-tuple $(X_{n1}(t), X_{n2}(t), X_{n3}(t), X_{n4}(t); L_n(t))_{n \in \{ 1, ..., N \}}$. The first four components denote the number of vehicles waiting to cross the $n$-th intersection with 1, 2, 3, and 4 denoting the west-to-east, north-to-south, east-to-west, and south-to-north directions, respectively. The last component $L_n(t) \in \{ 0, 1, 2, 3 \}$ indicates the configuration of the traffic light at the $n$-th intersection similar as before:
	\begin{equation}
		L_n(t+1) = (L_n(t) + A_n(t)) \mod 4, \label{Eqn:LightChangeMultipleIntersection}
	\end{equation}
	with $A_n(t) \in \{0, 1\}$ denoting the action selected for the $n$-th road intersection at the end of time slot~$t$. The evolution of the various queue states is determined by the recursion
	\begin{equation} \label{Eqn:QueueStateMultipleIntersection}
	X_{ni}(t+1) = X_{ni}(t) + C_{ni}(t) - D_{ni}(t),
	\end{equation}
	with $C_{ni}(t)$ denoting the number of vehicles in direction~$i$ appearing at the $n$-th intersection during time slot~$t$ and $D_{ni}(t)$ denoting the number of vehicles in direction~$i$ crossing the $n$-th intersection during time slot~$t$, $i \in \{ 1, ..., 4 \}$, $n \in \{ 1, ..., N \}$.
	Note that
	\begin{equation} \label{Eqn:DepartingFlowMultipleIntersection}
	\begin{split}
		&D_{n1}(t) = \min\{1, X_{n1}(t)\}, ~~D_{n3}(t) = \min\{1, X_{n3}(t)\},\\
		&~~~~~~~~~~~~~~~~~~~~~~~~~~~~~~~~~\text{if}~L_n(t) = 0,\\
		&D_{n1}(t)=0,~~D_{n3}(t) = 0, ~\, \text{if}~L_n(t) \neq 0;\\
		&D_{n2}(t) = \min\{1, X_{n2}(t)\}, ~~D_{n4}(t) = \min\{1, X_{n4}(t)\},\\
		&~~~~~~~~~~~~~~~~~~~~~~~~~~~~~~~~~\text{if}~L_n(t) = 2,\\
		&D_{n2}(t)=0,~~D_{n4}(t) = 0, ~\, \text{if}~L_n(t) \neq 2.
	\end{split}
	\end{equation}

	\vspace{0.05in}
	\vspace{0.05in}
	\vspace{0.05in}
	\begin{table*}[tbp]
		\normalsize
		\begin{tabular}{lp{0.45\textwidth}}
			\toprule
			\textbf{Algorithm 1}: Environment simulation (one step) for a single road intersection or a grid road network\\
			\toprule
			\,\,\,1:~Select action $a$ according to line 3 in Alg.~2 [a single road intersection] or line 3 in Alg.~3 [a grid road network]; \\
			\,\,\,2:~Generate random variables $C_1$, $C_2$ [a single road intersection] or $C_{n1}$, $C_{n3}$ and $C_{n2}$, $C_{n4}$ for all $n \in \{ 1, ..., N \}$ \\
			\,\,\,~~~[a grid road network]; \\
			\,\,\,3:~Execute action $a$, determine the new queue states and traffic light states $X_1', X_2', L'$ according to \eqref{Eqn:LightChangeOneIntersection} $\sim$ \eqref{Eqn:DepartingFlowOneIntersection} \\
			\,\,\,~~~~[a single road intersection] or $X'_{n1}, X'_{n2}$, $X'_{n3}, X'_{n4} = 0$; $L'_n = 0$ for all $n \in \{ 1, ..., N \}$ according to \eqref{Eqn:LightChangeMultipleIntersection} $\sim$ \eqref{Eqn:DepartingFlowMultipleIntersection} \\
			\,\,\,~~~~[a grid road network];\\
			\,\,\,4:~Calculate the reward $r = - (|X_1'|^2 + |X_2'|^2)$ [a single road intersection] or $r = - \sum_{n=1}^{N} \sum_{i = 1}^{4} |X_{ni}'|^2$ \\
			\,\,\,~~~[a grid road network]; \\
			\,\,\,5:~Update states: $X_1 = X_1', X_2 = X_2', L = L'$ [a single road intersection] or $X_{n1} = X'_{n1}, X_{n2} = X'_{n2}$, $X_{n3} = X'_{n3},$ \\
			\,\,\,~~~$ X_{n4} = X'_{n4}$, $L_n = L'_n$ for all $n \in \{ 1, ..., N \}$ [a grid road network].\\
			\bottomrule
		\end{tabular}
	\end{table*}
	\vspace{0.05in}
	\vspace{0.05in}
	\vspace{0.05in}

	\vspace{0.05in}
	\vspace{0.05in}
	\vspace{0.05in}
	\begin{table*}[tbp]
		\normalsize
		\begin{tabular}{lp{0.45\textwidth}}
			\toprule
			\textbf{Algorithm 2}: DQN for a single road intersection\\
			\toprule
			\,\,\,1:~Initialize $X_1 = X_2 = 0, L = 0$, action-value $Q$ with random weights $\theta_0$, batch size $M$, number of time steps $K$, \\
			\,\,\,~~~and replay buffer $D$;\\
			\,\,\,2:~\textbf{for} steps $k = 1, \dots, K$ \\
			\,\,\,3:~~~Select $a = \arg\underset{\bar{a} \in \mathcal{A}}\max~ Q(s, \bar{a})$ with probability $1 - \epsilon$ or a random action $a$ with probability $\epsilon$;\\
			\,\,\,4:~~~Execute action $a$, get new state $s' = [X_1', X_2'; L']$ and reward $r$ from Alg.~1;\\
			\,\,\,5:~~~Store transition $(s, a, r, s')$ in replay buffer $D$; \\
			\,\,\,6:~~~Sample a mini-batch of $M$ transitions $\{ (s^j_t, a^j_t, r^j_t, s^j_{t + 1}) \}_j$;\\
			\,\,\,7:~~~Update the networks using Adam optimizer: \\
			~~~~~~~~~$\nabla_{\theta_k} \text{Loss}(\theta_k) = \mathbb{E}_{s^j_t, a^j_t, r^j_t, s^j_{t + 1}}\bigg[(r+\gamma\max_{a^j_{t + 1}}\big(Q(s^j_{t + 1},a^j_{t + 1}|\theta_{k - 1})\big) -Q(s^j_t,a^j_t|\theta_k)) \nabla_{\theta_k} Q(s^j_t,a^j_t|\theta_k)\bigg]$.\\
			\bottomrule
		\end{tabular}
	\end{table*}
	\vspace{0.05in}
	\vspace{0.05in}
	\vspace{0.05in}

	\subsection{Objective Function for Optimization}
	\label{Subsec:OptimizationGoal}

	We assume that the ``congestion cost" in time slot~$t$ may be expressed as a function of the queue state. A quadratic function $Z(X(t))$ is applied to formulate the congestion cost. In the single road intersection scenario, $X(t) = (X_1(t), X_2(t))$, and $Z(X(t)) = X_1^2(t) + X_2^2(t)$. In the grid road network with $N$~intersections, $X(t) = (X_{n1}(t), X_{n2}(t), X_{n3}(t), X_{n4}(t))_{n \in \{ 1, ..., N \}}$, and $Z(X(t)) = \sum_{n = 1}^{N} \sum_{i = 1}^{4} X_{ni}^2(t)$.

	The optimization goal is to find a dynamic control policy which selects actions over time slots $\{ 0,1,...,T \}$ to minimize the long-term expected discounted cost $\mathbb{E}\left[\sum_{t=0}^{T} \gamma^t Z(X(t))\right]$, with $\gamma \in (0, 1]$ representing a discount factor.

	\section{Deep Reinforcement Learning Algorithms} \label{Sec:Solution}

	\vspace{0.05in}
	\vspace{0.05in}
	\vspace{0.05in}
	\begin{table*}[tbp]
		\normalsize
		\begin{tabular}{lp{0.45\textwidth}}
			\toprule
			\textbf{Algorithm 3}: DDPG for a grid road network\\
			\toprule
			\,\,\,1:~Initialize batch size $M$, number of time steps $K$, and replay buffer $D$, the actor and the critic's online networks  \\
			\,\,\,~~~($\mu(s|\theta^\mu)$ and $Q(s,a|{\theta^Q})$) with parameters $\theta^\mu$ and $\theta^Q$, and target networks ($\mu'(s|\theta^{\mu'})$ and $Q'(s,a|\theta^{Q'})$) with parameters; \\
			\,\,\,~~~$\theta^{\mu'}$ and $\theta^{Q'}$, queue states and traffic light states: $X_{n1}, X_{n2}$, $X_{n3}, X_{n4} = 0$; $L_n = 0$ for all $n \in \{ 1, ..., N \}$; \\
			\,\,\,2:~\textbf{For} steps $k = 1, \dots, K$  \\
			\,\,\,3:~~~~~Choose action $a = \mu(s|\theta^{\mu}) + \mathcal{N}$, where $\mathcal{N}$ is Gaussian noise; \\
			\,\,\,4:~~~~~Execute action $a$, get new state $s' = [X_{n1}, X_{n2}$, $X_{n3}, X_{n4}; L_n]$ for all $n \in \{ 1, ..., N \}$ and reward $r$ from Alg.~1;\\
			\,\,\,5:~~~~~Store transition $(s, a, r, s')$ in memory; \\
			\,\,\,6:~~~~~Sample a mini-batch of $M$ transitions $\{ (s^j_t, a^j_t, r^j_t, s^j_{t + 1}) \}_j$; \\
			\,\,\,7:~~~~~Calculate the target Q-value: $y^j = r^j_t + \gamma Q'(s^j_{t + 1}, \mu'(s^j_{t + 1}| \theta^{\mu'})|\theta^{Q'})$; \\
			\,\,\,8:~~~~~Update the critic's online Q-network $Q$ by minimizing the loss function: \\
			~~~~~~~~\,$\nabla_{\theta^Q} \text{Loss}_t(\theta^Q)= \nabla_{\theta^Q} [\frac{1}{M} \sum^{M}_{j = 1} (y^j - Q(s^j_t, a^j_t| \theta^Q))^2]$; \\
			\,\,\,9:~~~~~Update the actor's online policy network with the sampled policy gradient by the chain rule: \\
			~~~~~~~~~$\mathbb{E}_{s_t} [\nabla_{\theta^{\mu}} Q(s, a| \theta^Q)|_{s = s_t, a = \mu(s_t|\theta^{\mu})}] = \mathbb{E}_{s_t} [\nabla_{a} Q(s, a| \theta^Q)|_{s = s_t, a = \mu(s_t)} \nabla_{\theta^{\mu}} \mu(s|\theta^{\mu})|_{s = s_t} ] $; \\
			10:~~~~~Update the critic's target Q-network $Q'$ and actor's target policy network $\mu'$: \\
			~~~~~~~~~$\theta^{Q'}  \leftarrow \tau \theta^Q + (1 - \tau) \theta^{Q'}$, $\theta^{\mu'}  \leftarrow \tau \theta^{\mu} + (1 - \tau) \theta^{\mu'}$ with $\tau \ll 1$. \\
			\bottomrule
		\end{tabular}
	\end{table*}
	\vspace{0.05in}
	\vspace{0.05in}
	\vspace{0.05in}


	In this section, we first show a brief description of the reinforcement learning and Q-Learning. Then we propose two deep reinforcement learning schemes, DQN \cite{mnih2015human} and DDPG \cite{lillicrap2016DDPG} algorithms in different scenarios. We use the DQN algorithm in a single road intersection in Fig.~\ref{Fig:OneIntersectionAndStateTransition}. While we use the DDPG algorithm in a grid road network in Fig.~\ref{Fig:LinearRoadNetwork}. The DDPG algorithm applies the actor-critic framework in continuous action space. The traffic lights' actions in our experiments are discrete, i.e., $A(t) \in \{0, 1\}$, thus we apply a discretization process to the conventional DDPG algorithm.
	Finally, we demonstrate that the ''greenwave" control policy is optimal in a grid road network.

	\subsection{Reinforcement Learning and Q-Learning}

	\textbf{Definition of cost function}. We define a \textit{discounted congestion cost} over time slots $\{ 0,1,..., T \}$ of a road intersection under policy $\pi$ with the discount factor $\gamma$
	\begin{equation}
		V_\pi= - \frac{1}{T}\mathbb{E} \left[ \sum_{t=0}^{T}\gamma^t|X(t)|^2 \right],
	\end{equation}
	where $|X(t)|^2\overset{\text{def}}{=}|X_1(t)|^2+|X_2(t)|^2$ in a single intersection or $|X(t)|^2\overset{\text{def}}{=} \sum_{n=1}^{N} \sum_{i = 1}^{4} |X_{ni}(t)|^2$ in a grid road network. The optimization goal in Subsection \ref{Subsec:OptimizationGoal} is to minimize the long-term expected queue length, thus we take the negative value to be consistent with the maximization operation. The optimal policy $\widehat{\pi}$ satisfies
	\begin{equation}
		V_{\widehat{\pi}}= - \sup \frac{1}{T}\mathbb{E} \left[ \sum_{t=0}^T\gamma^t|X(t)|^2 \right].
	\end{equation}
	Combining the Bellman optimality equation \cite{sutton2018RL} and $V_{\widehat{\pi}}=-\max\mathbb{E} [ \sum_{t=0}^T\gamma^t|X(t)|^2 ]$, we derive the following equation
	\begin{equation}
		V_{\widehat{\pi}}=\max\{r_t + \gamma \max\sum_{k=1}^{T-t}\gamma^{k-1} r_{t+k}\},
	\end{equation}
	where $r_t=-|X(t)|^2$. We define $Q(s, a)$ to be the maximum achievable expected discounted reward (or minimum discounted congestion cost in our context) under the optimal policy starting from state $s = (X; L)$ when action~$a$ is taken. $Q(s, a)$ satisfies the equation
	\begin{equation}
		\begin{split}
			Q(s, a)
			&= r(s, a) + \gamma \sum\limits_{s' \in \mathcal{S}} p(s, s'; a)~\max\limits_{a' \in \mathcal{A}}~ Q(s', a')  \\
			&= r(s, a) + \gamma \mathbb{E}\left[\underset{a' \in \mathcal{A}}\max~ Q(s', a')\right],
			\label{qvalues}
		\end{split}
	\end{equation}
	with $r(s, a) = -Z(X)$ denoting the reward (i.e., negative congestion cost) in queue state~$X$, and $p(s, s'; a)$ denoting the transition probability from state~$s$ to state~$s'$ when action~$a$ is taken.

	\subsection{DQN for Single Road Intersection}

	In the DQN algorithm \cite{mnih2015DQN}, the neural network takes the state and action as the input, and the action-value as the output. Compared with traditional Q-learning algorithms, the neural network of DQN is used as a function approximator to estimate the action-value function. The neural network is trained by adjusting its parameters in each iteration to reduce the mean-squared error in the Bellman equation.

	In addition, the DQN algorithm uses an experience replay buffer to update network parameters. It has two advantages: 1) enabling the stochastic gradient decent algorithm \cite{daniely2017SGD}; and 2) removing the correlations between consecutive transitions.


	We simulate the environment using use Alg.~1. Line 1 selects an action according to DRL algorithms. Line 2 generates the number of arrival vehicles of each traffic flow. Line 3 calculates new queue states and traffic light states. Line 4 calculates the reward. Finally, line 5 updates all states.

	Alg.~2 shows the details of the DQN for a single road intersection. Line 1 is the initialization. Lines 2 $\sim$ 7 are a $K$-step loop of the train process. Line 3 determines the action. Line 4 executes the action, and updates the new queue states and traffic light states, and get the reward. Line 5 stores the state transition. Line 6 samples a mini-batch of transitions, and line 7 updates the networks.

	\subsection{Scalable DDPG for Grid Road Network}

	The DDPG algorithm \cite{lillicrap2016DDPG} applies neural networks, i.e., a critic and an actor, to approximate the action-value and action policy, respectively. The critic takes the state and action as the input, and action-value as the output. The actor takes the state as the input, and the action policy as the output. This actor-critic approach has two advantages: 1) it uses neural networks as approximators, essentially compressing the state and action space to much smaller latent parameter space, and 2) the gradient descent method can be used to update the network weights, which greatly speeds up the convergence and reduces the training time. Therefore, the memory and computational resources are largely saved. The DDPG algorithm has shown impressive and powerful skills in AlphaGo zero \cite{silver2017Game} and Atari game playing \cite{mnih2013AtariDRL}.

	Besides the actor-critic approach, the DDPG algorithm mainly uses three techniques to solve MDP problems: experience replay buffer, soft update, and exploration noise. The experience replay buffer stores transitions just as described in the DQN algorithms. Soft update with a low learning rate is introduced to improve the stability of learning. Exploration noise is added to the actor's target policy to obtain a new exploration policy, which helps to explore the other space to get better policy.

	The DDPG algorithm is first trained offline using memory as the sample pool. In the training process, the DDPG algorithm samples transitions online, and trains the agent on collected data offline in memory for further learning. The networks are updated with the Adam optimizer \cite{kingma2014adam} gradient-descent. In the test process, the DDPG algorithms run in an online form without the exploration noise.


	The conventional DDPG algorithm has a continuous action space, while the traffic control actions in our model are discrete, i.e., $A_n(t) \in \{0, 1\}$. Therefore, we apply a discretization process to transform the continuous outputs of the actor network to discrete ones. A modified sigmoid function is added to the output layer of actor network as activation:
	\begin{equation}
		\widehat{y}=\text{sigmoid}(\widehat{\alpha} \,  \widehat{x}),
	\end{equation}
	where $\widehat{x}, \widehat{y}$ are inputs and outputs of the final layer, and $\widehat{\alpha}$ is the ratio for steepening the sigmoid function. Combining the modified sigmoid activation function with a node-wise binarization process, our discrete DDPG algorithm can reduce the errors caused by the continuous-to-discrete transformation to a great degree.

	To converge to a favorable strategy in a grid road network, we set the number of cars and traffic light states at all intersections as inputs in DDPG. Let's take intersection $n$ as an example. The number of cars on each queue is $[X_{n1}, X_{n2}, X_{n3}, X_{n4}]$, and a light state is $L_n$. The observation tuple $S = [X_{n1}, X_{n2}, X_{n3}, X_{n4}, L_n]$ is a quintuple, which is also the inputs of both the critic's online Q-network and the actor's online policy network.


	Alg.~3 shows the details of DDPG for a grid road network. Line 1 is the initialization. In the following for-loop (line 2 $\sim$ 6), we train the policy. Line 3 chooses an action based on the actor's online policy network and the exploration noise. Line 4 executes the action and gets the new state and reward. Line 5 stores the state transition. Line 6 samples a mini-batch of transitions. Line 7 calculate the target Q-value, and line 8 updates the critic's online Q-network. Line 9 updates the actor's online policy network. Line 10 updates the critic's target Q-network $Q'$ and actor's target policy network $\mu'$ based on their online networks, respectively.

	\subsection{Optimality of the ``Greenwave" Policy} \label{Subsec:Optimality}

	In this subsection, we demonstrate the optimality of the ``greenwave" control policy in a grid road network with symmetric fluid traffic.

	To obtain the optimal policy in a grid road network, we first obtain the optimal policy in a single intersection, and then extend it to a grid road network to obtain the ``greenwave" policy.

	In a single intersection scenario, the optimal policy is a ``thresholding policy" \cite{hofri1987optimal}. The traffic light changes only when the difference between two queue lengths reaches a critical value.

	In the grid road network scenario, we assume there is an avenue (i.e., an artery road) with multiple cross streets. We imagine that all the intersections constitute a large ``intersection". The ``greenwave" policy is defined as follows: the traffic lights simultaneously change only when the difference between the avenue's queue length and the cross streets' queue length reaches a critical value. For a vehicle in the avenue, it will see a progressive cascade of green lights, and not have to brake at any intersection. Therefore, this control policy is called ``greenwave". The ``greenwave" policy achieves the optimality in a grid road network (an avenue with multiple cross streets). The detailed proof is moved to the appendix.

	\section{Performance Evaluation} \label{Sec:Performance}

	\begin{figure*}[htbp]
		\includegraphics[width=0.33\textwidth]{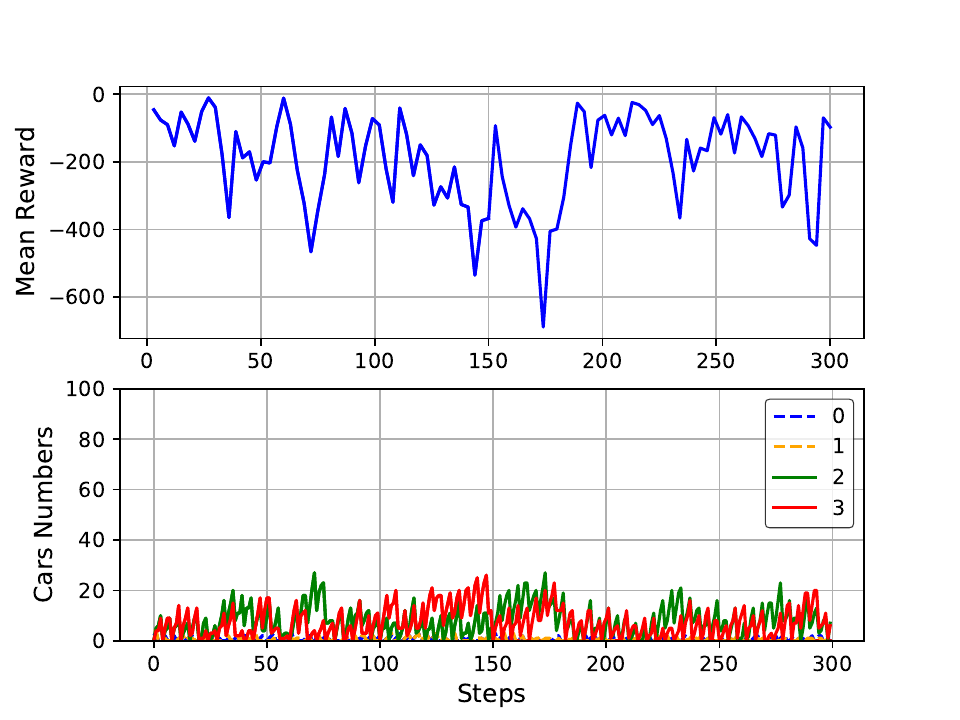}
		\includegraphics[height=1.79in,width=0.33\textwidth]{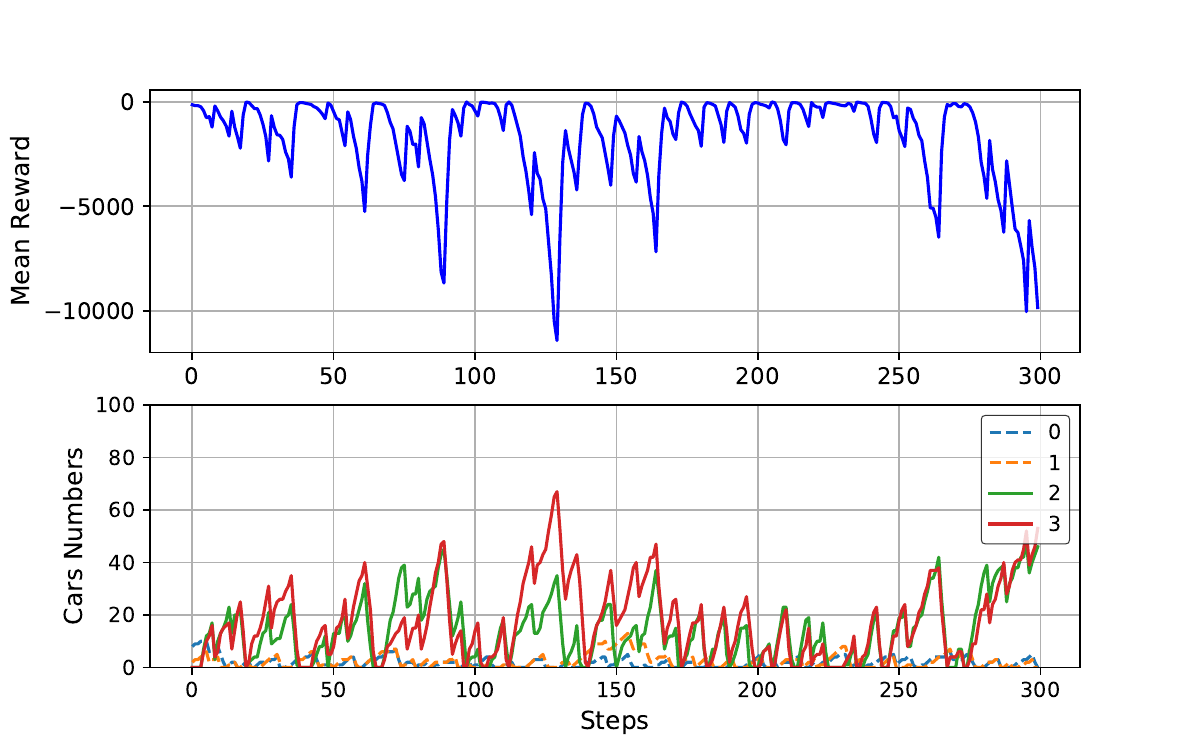}
		\includegraphics[width=0.33\textwidth]{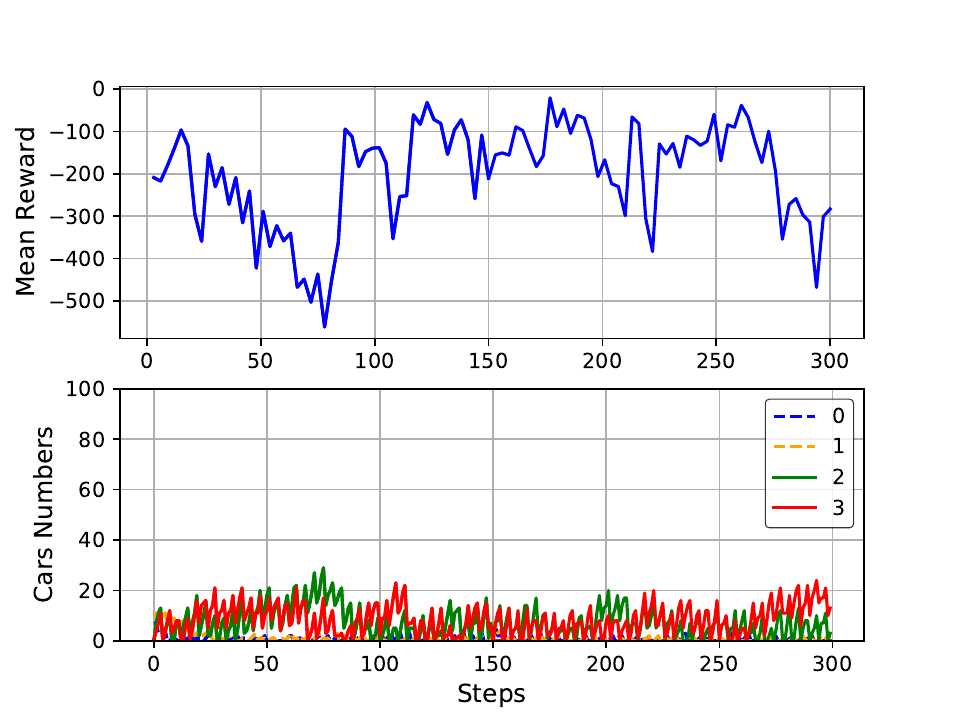}
		\caption{Performance of DQN control policy (left), fixed-cycle policy (middle), and the optimal policy (right) for single intersection. Tests of 300 steps in simulation are shown with average reward and number of waiting cars. Better policies are supposed to reduce the numbers of waiting cars. The DQN algorithm matches the optimal policy.}
		\label{fig:SingleIntersection}
	\end{figure*}

	\begin{figure*}
		\centering
		\begin{minipage}{0.46\textwidth}
			\centering
			\includegraphics[width=0.95\linewidth]{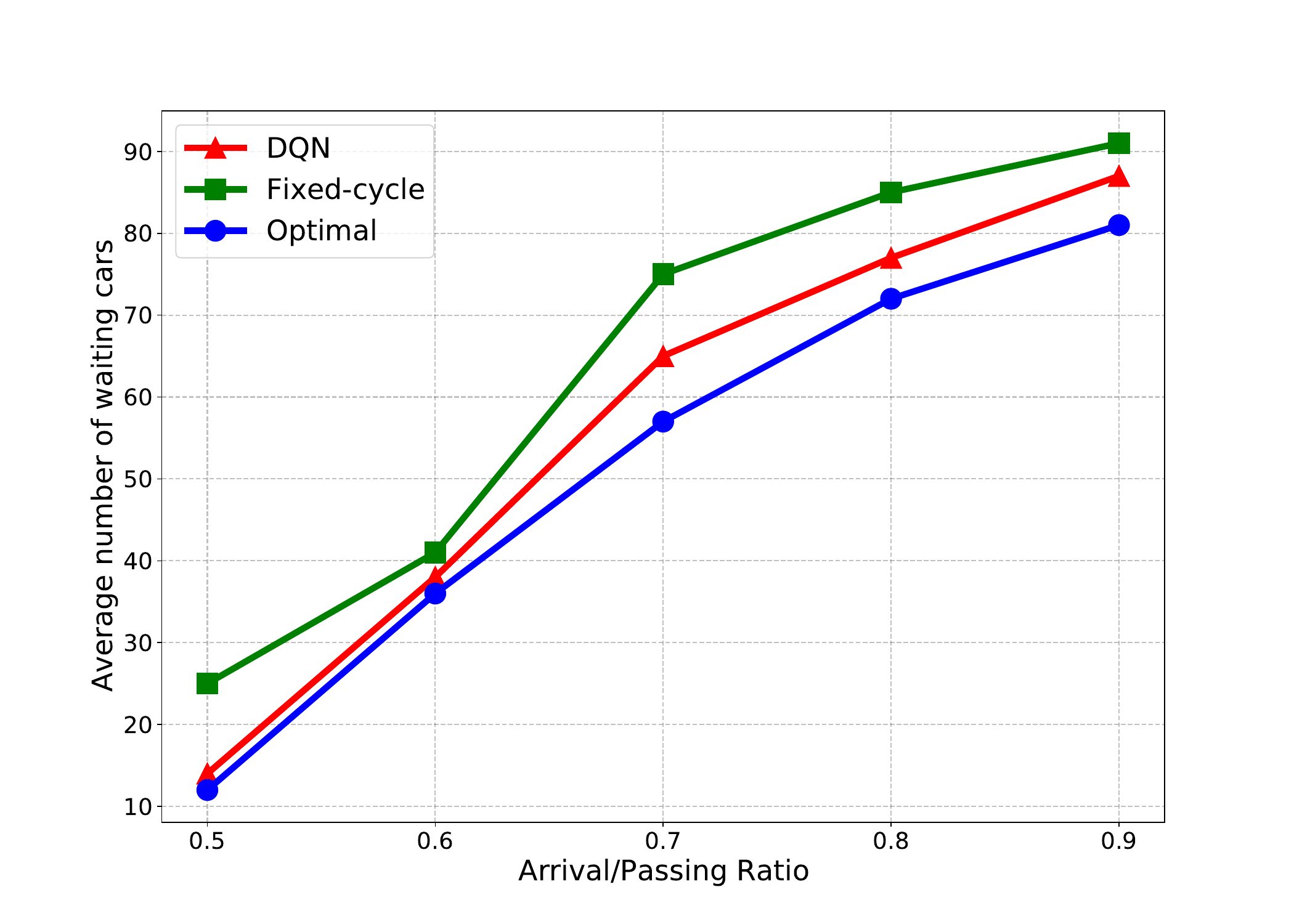}
			\caption{Comparison for a single intersection with different car arrival/passing ratio. 
			}
			\label{Fig:ArrivalParssingRatio}
		\end{minipage}%
		\hspace{0.3in}
		\begin{minipage}{0.46\textwidth}
			\centering
			\includegraphics[width=0.95\linewidth]{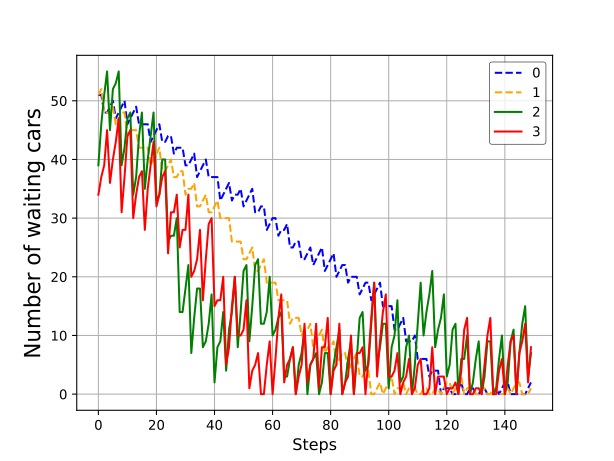}
			\caption{Stabilization of a single intersection with large initial queue length.
			}
			\label{fig:stable}
		\end{minipage}
		\label{fig:test}
	\end{figure*}

	\begin{figure*}[t]
		\centering
		\includegraphics[width=0.95\textwidth]{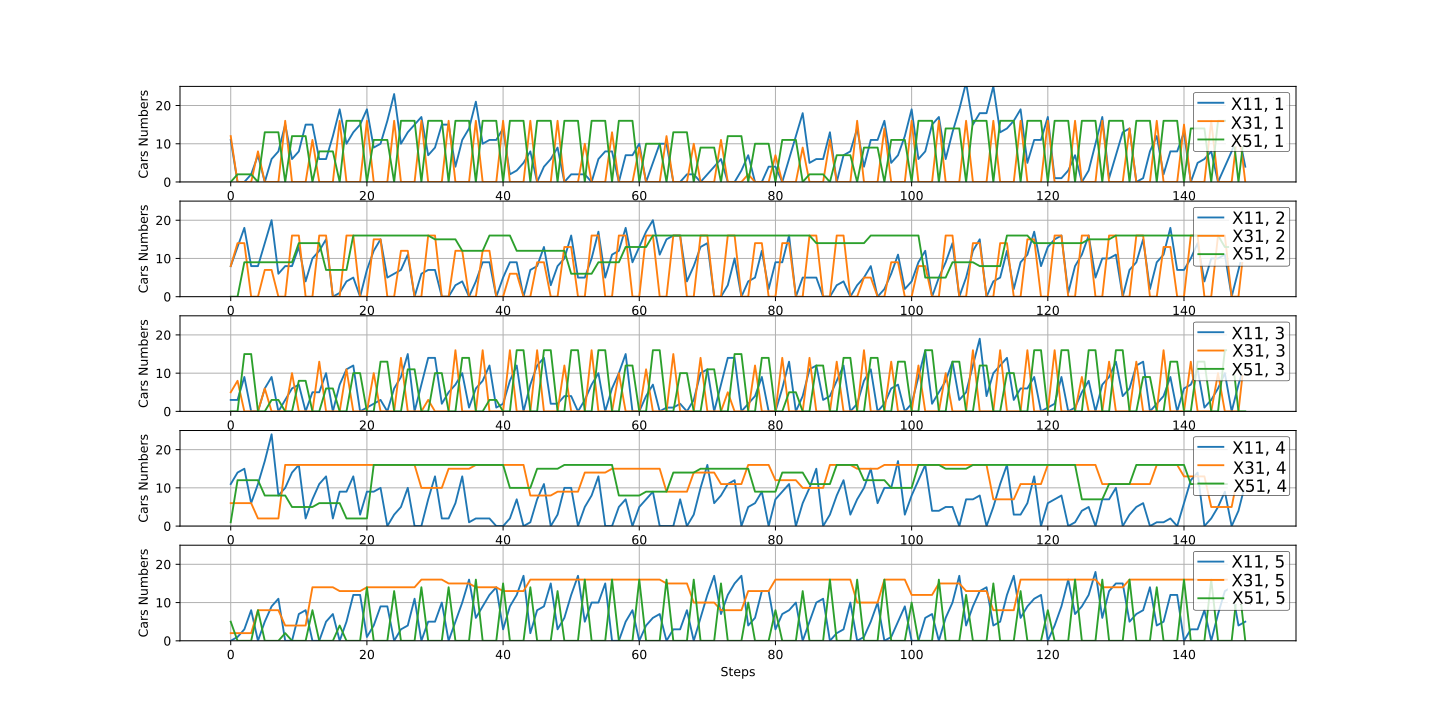}
		\caption{DDPG policy for a grid road network with $5 \times 10$ intersections.}
		\label{fig:DDPG_grid}
	\end{figure*}


	In this section, we first show experiment settings, and then present the results in a single road intersection, and finally demonstrate the results in a grid road network, especially the emergence of ``greenwave'' patterns.

	\subsection{Experiment Settings}

	In the experiments, we apply a 2-layer fully-connected neural network in DQN in the single intersection scenario, with size 400 and tanh as activation function. For the grid road network scenario, we apply a 4-layer fully-connected neural network in DDPG with size 600 for both the actor and critic. The actor needs to output near-binary values as action values, therefore, it has one modified sigmoid activation function for the last output layer; while the critic network has no activation function for the last output layer.

	The outputs of the actor network are clipped to binary values 0 and 1 indicating the traffic light state continuing and switching. We take an episode length of 150 steps of simulation for collecting the learning samples, and train both actor and critic networks with a batch size of 64 and $\gamma$ = 0.99. The Ornstein-Uhlenbeck noise \cite{horsthemke1980perturbation} with variance 0.3 is applied for explorations.

	The implementation of our algorithms uses the deep learning framework TensorFlow \cite{abadi2016tensorflow}. Our simulations are executed on a server with Linux OS, 200 GB memory, two Intel(R) Xeon(R) Gold 5118 CPUs@2.30 GHz, a Tesla V100-PCIE GPU.

	Let's discuss the traffic environment settings. The vehicle arrival rate indicates the number of vehicles arriving from outside of the road networks into the road networks on each intersection per time slot, which is set to be a random value with upper bound $C_a$ for avenue or $C_c$ for cross streets; and the vehicle passing rates represents the number of vehicles passing one intersection within one time step, and set to be fixed number of 16 and 4 for avenue and cross streets, respectively. The arrival/passing ratio is the ratio of arrival rate to passing rate. The comparison of average queue length (averaged sum of queueing cars on all roads over time) in a single intersection with respect to the arrival/passing ratio for different policies is shown in Fig.~\ref{Fig:ArrivalParssingRatio}.

	\subsection{Verifying Optimal Control in Single Road Intersection}

	As an initial validation benchmark, we first consider a single road intersection scenario as described in Subsection~\ref{Subsec:SingleRoadIntersection}. The reason for considering this toy scenario is that the state-action space is sufficiently small and the optimal policy can be computed using a conventional MDP approach such as policy iteration. We assume the numbers of arriving vehicles of both traffic flows in each time step are represented by the random variables, which are independent and Bernoulli distributed with parameter $P = 0.25$.

	In the single intersection, the optimal policy is a ``thresholding policy" \cite{hofri1987optimal}.
	And the optimal policy can be obtained using the policy iteration algorithm by the MDP toolbox \cite{Python_MDP}, which is a ``thresholding policy" \cite{hofri1987optimal}. Comparisons of the policies derived by the DQN algorithm, the fixed-cycle policy, and the optimal policy, are shown in Fig.~\ref{fig:SingleIntersection}. The result of the DQN policy coincides with that of the optimal policy. The training time using DQN is very short and can be ignored, since the state-action space is too small in a single road intersection. In particular, they match the optimal performance and exhibit a similar thresholding structure. This structural property was established in \cite{hofri1987optimal} for a strongly related two-queue dynamic optimization problem (with switch-over costs rather than switch-over times). Fig.~\ref{Fig:ArrivalParssingRatio} explores the effects of different arrival/passing ratio. As the car arrival/passing ratio increases, the queue length also increases. And the policy learned by DQN shows better effects than the fixed-cycle policy. Fig.~\ref{fig:stable} studies the stabilization with large initial queue lengths. The arrival/passing ratio is 0.5. The policy learned by DQN shows capability of stabilizing the traffic.

	\subsection{``Greenwave" in Grid Road Network} \label{subsection:DDPGPerformance}

	We now turn to a grid road network as Fig.~\ref{Fig:LinearRoadNetwork}. This is a more challenging scenario. It serves to examine the scalability properties of our DDPG algorithm. Moreover, it has more highly complex interactions arising from the vehicle flow along the avenue.

	We assume the numbers of arriving vehicles in eastern and western direction for the avenue in each time step are represented by random variables, which are independent and Bernoulli distributed with parameter $P_1 = 0.5$. The numbers of arriving vehicles in southern and northern directions on each of the $N$ cross streets in each time step are represented by random variables, which are independent and Bernoulli distributed with parameter $P_2 = 0.25$. Next, we evaluate the performance in a 5 $\times$ 10 grid road network.

	The car numbers with time steps are shown in Fig.~\ref{fig:DDPG_grid}. The conventional MDP approach such as policy iteration is computationally infeasible due to the state-action space explosion, and hence whether our algorithm is favorable cannot be assessed quantitatively. Therefore, we examine qualitative features to validate the intelligent behavior and evaluate its performance merit. In particular, we observe the emergence of ``greenwave'' patterns, even though such structural features are not explicitly prescribed in the optimization process. This emergent intelligence confirms the capability of our algorithms to learn favorable structural properties solely from passive observations.

	\section{Conclusion} \label{Sec:Conclusion}

	We have explored the potential for deep Q-network (DQN) and deep deterministic policy gradient (DDPG) algorithms to optimize real-time traffic light control policies in a single road intersection and a grid road network. As an initial benchmark, we established that the policy obtained from the DQN algorithm matches the optimal policy achieved by a conventional MDP approach in a single intersection. We subsequently evaluated the scalability of the DDPG algorithm in a grid road network, and demonstrated the emergence of the high-level intelligent behavior named as ``greenwave" policies, confirming its ability to learn desirable structural features. We also theoretically proved that the ``greenwave" policy is the optimal control policy under a specified traffic flow model (an avenue with multiple cross streets). The ``greenwave" patterns demonstrate that DRL algorithms output favorable solutions in a grid road network.

	In future research, we will consider the traffic flows with stochastic arrivals and departures of vehicles among all areas of a road network, and we intend to investigate locality properties and analyze how to design distributed coordination schemes for wide-scale deployment scenarios using new techniques, e.g., multi-agent DRL algorithms, hierarchical DRL algorithms, and edge computing.

	%

	\vspace{-0.40in}
	\begin{IEEEbiography}[{\includegraphics[width=1in,height=1.21in,clip,keepaspectratio]{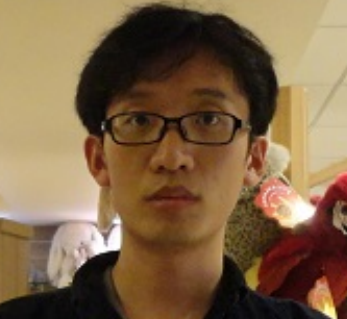}}]\\~~{\textbf{Ming~Zhu}} is with Institute of Automation, Chinese Academy of Sciences, Beijing 100190, China, also with the School of Artificial Intelligence, University of Chinese Academy of Sciences, Beijing 100049, China.

	His research interests are in the area of machine learning and quantum computing.
	\end{IEEEbiography}

	\vspace{-0.40in}
	\begin{IEEEbiography}[{\includegraphics[width=1.40in,height=1.07in,clip,keepaspectratio]{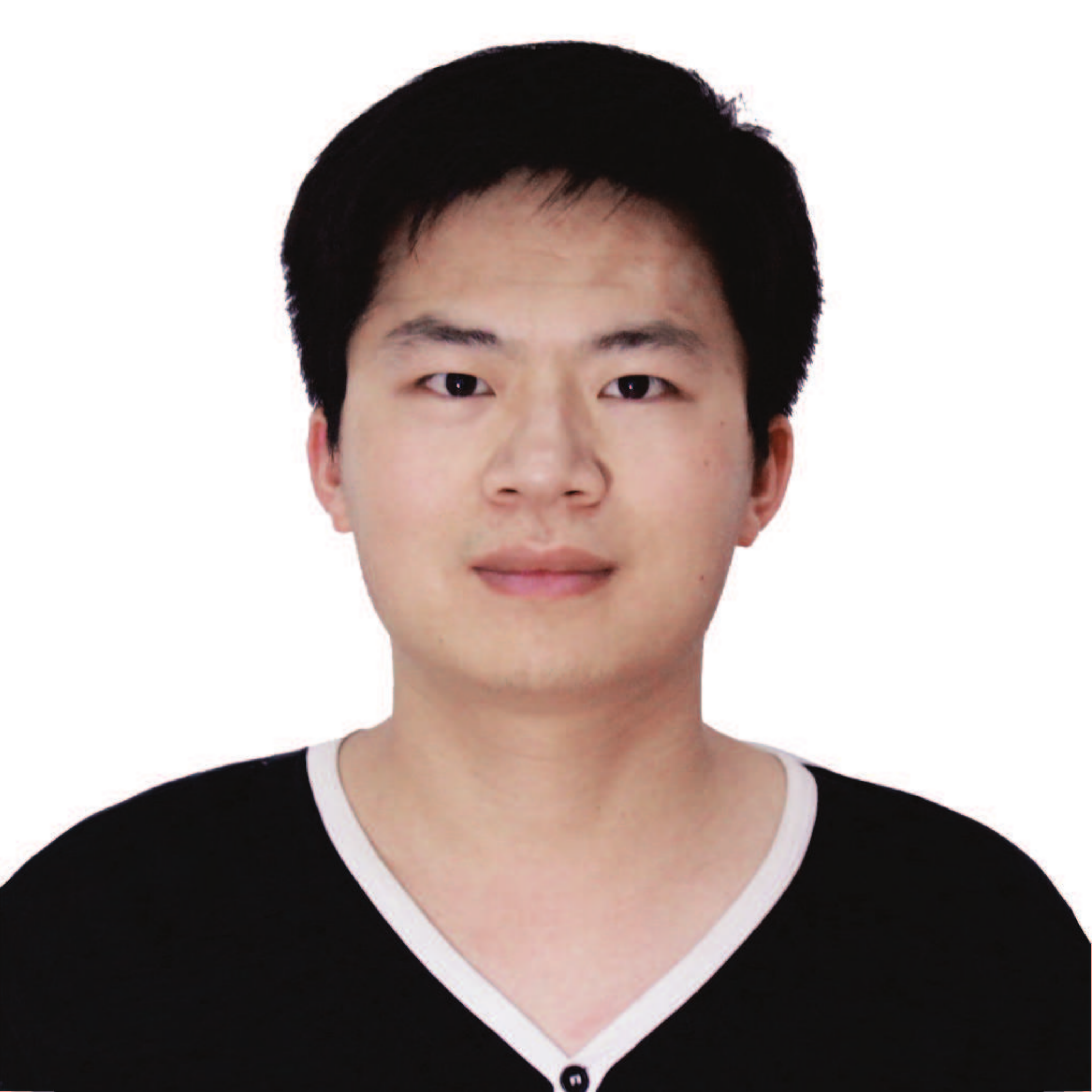}}]{Xiao-Yang~Liu} received M.S. degree in Electrical
	Engineering from Columbia University, New York, NY, USA, in 2018, and is currently a Ph.D. candidate at the Department of Electrical
	Engineering, Columbia University, New York, NY, USA.

	His research interests include tensor and tensor networks, high performance tensor computing, high performance tensor computing, deep learning, non-convex optimization, big data analysis and IoT applications.
	\end{IEEEbiography}
	
	\vspace{-0.40in}
	\begin{IEEEbiography}[{\includegraphics[width=1.40in,height=1.07in,clip,keepaspectratio]{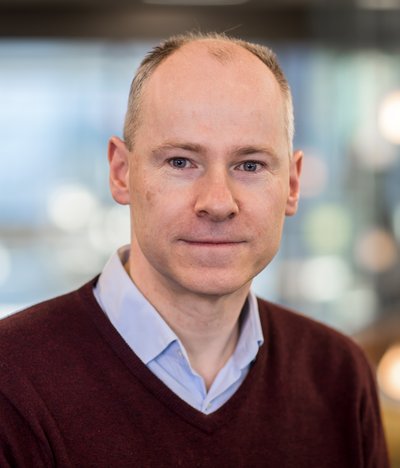}}]{Sem~Borst}
		Sem Borst received his MSc degree in applied mathematics from the University of Twente in 1990, and his PhD degree from Tilburg University in 1994.  In 1994, he was a visiting scholar at the Statistical Laboratory of the University of Cambridge, England.  In 1995, Sem joined the Mathematics of Networks and Systems research department of Bell Labs in Murray Hill, USA.  In addition to his position at TU/e, he maintains a (part-time) affiliation with Bell Labs. Sem has published over 190 papers in refereed journals and conference proceedings, and his H-index is 39.  He serves or has served on the editorial boards of several journals, such as ACM Transactions on Modeling and Performance Evaluation of Computing Systems, IEEE/ACM Transactions on Networking, Mathematical Methods of Operations Research and Queueing Systems, and served as program committee member of various conferences.
	\end{IEEEbiography}


	\vspace{-0.40in}
	\begin{IEEEbiography}[{\includegraphics[width=1.40in,height=1.07in,clip,keepaspectratio]{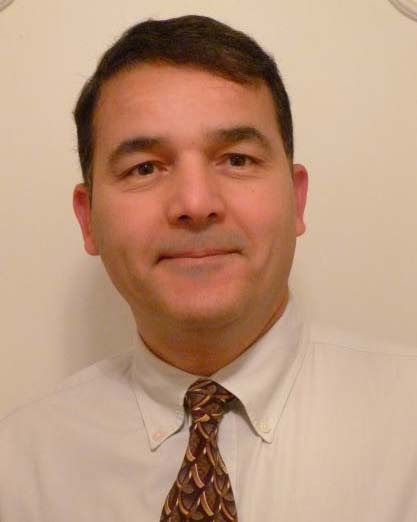}}]{Anwar~Walid}
		Anwar Walid (F’09) received the B.S. and M.S. degrees in electrical and computer engineering from New York University, New York, NY, USA, and the Ph.D. degree from Columbia University, New York, NY, USA.

		He is a Director of Network Intelligence and a Distributed Systems Research, and a Distinguished Member of the Research Staff with Nokia Bell Labs, Murray Hill, NJ, USA, where he also served as the Head of the Mathematics of System Research Department and the Director of University Research Partnerships. He is an Adjunct Professor with the Electrical Engineering Department, Columbia University. He has over 20 U.S. and international granted patents on various aspects of networking and computing. His current research interests include control and optimization of distributed systems, learning models and algorithms with applications to Internet of Things, digital health, smart transportations, cloud computing, and software defined networking.
		Dr. Walid was a recipient of IEEE and ACM Awards, including the 2017 IEEE Communications Society William R. Bennett Prize and the ACM SIGMETRICS/International Federation for Information Processing (IFIP) Performance Best Paper Award. He served as an Associate Editor for the IEEE/ACM Transactions on Cloud Computing, IEEE Network Magazine, and the IEEE/ACM Transactions on Networking. He served as the Technical Program Chair for IEEE INFOCOM, the General Chair of the 2018 IEEE/ACM Conference on Connected Health (CHASE), and the Guest Editor of the IEEE Internet of Things Journal “Special Issue on AI-Enabled Cognitive Communications and Networking for Internet of Things.” He is an elected member of the IFIP Working Group 7.3 and Tau Beta Pi Engineering Honor Society.
	\end{IEEEbiography}

	\begin{appendix}

		We first describe the traffic model. Then, we discuss the stability condition and the queue length evolution. Finally, we prove the optimality of the ``greenwave" control policy.

		\noindent{\textbf{A. Model Description}}

		Consider a grid road network topology with $N$ nodes,
		representing an avenue (i.e., an artery road) with $N$ cross streets. Avenue traffic arrives at node $1$ as a deterministic `fluid flow' with a constant rate $\lambda_0$, and must traverse all the nodes before it leaves the network after passing through node $N$. In addition, cross traffic arrives at each node $n$ as a deterministic fluid flow with a constant rate $\lambda_n$ and leaves the network after passing through that node, $n \in \{ 1, ..., N \}$. For stability, we assume that $\lambda_0 + \lambda_n < 1$ for all $n \in \{ 1, ..., N \}$.

		Avenue traffic can only flow through node~$n$ when the traffic light is green, either at unit rate when it is backed up at node~$n$, $n \in \{ 1, ..., N \}$, or otherwise at the outflow rate of the upstream node $n - 1$ (when $n > 1$) or the external arrival rate~$\lambda_0$ (when $n = 1$). Likewise, cross traffic can only flow through node~$n$ when the traffic light is red, either at unit rate when it is backed up, or otherwise at the arrival rate $\lambda_n$, $n \in \{ 1, ..., N \}$.


		Denote by $Y_n$ and $O_n$ the (fixed) durations of the yellow and orange periods at node~$n \in \{ 1, ..., N \}$, respectively. The lengths of the $k$-th green and red periods at node~$n \in \{ 1, ..., N \}$ can be controlled, and are denoted by $G_n^{(k)}$ and $R_n^{(k)}$, respectively. For compactness, denote by $U_n^{(k)} = G_n^{(k)} + R_n^{(k)} + Y_n + O_n$ the length of the $k$-th control cycle at node~$n \in \{ 1, ..., N \}$. Further, we define
		\begin{equation}
			G_n = \lim_{K \to \infty} \frac{1}{K} \sum_{k = 1}^{K} G_n^{(k)}, ~ R_n = \lim_{K \to \infty} \frac{1}{K} \sum_{k = 1}^{K} R_n^{(k)},
		\end{equation}
		as the long-term average durations of the green and red periods at node $n$, assuming these limits to exist. Denote by $U_n$ the long-term average control cycle length at node~$n$,
		\begin{equation}\label{Eqn:AverageDuration}
		U_n = G_n + R_n +Y_n + O_n, ~n \in \{ 1, ..., N \}.
		\end{equation}

		\noindent{\textbf{B. Stability Condition}}

		To ensure the stability of queues, it is required that
		\begin{equation}\label{Eqn:StabilityCondition}
		\frac{G_n}{U_n} \geq \lambda_0, ~~\frac{R_n}{U_n} \geq \lambda_n, ~~ n \in \{ 1, ..., N \},
		\end{equation}
		since the avenue traffic arrives continuously at rate $\lambda_0$ and can only pass node $n$ at unit rate during green periods at that node, and likewise, the cross traffic at node $n$ arrives continuously at rate $\lambda_n$ and can only pass at unit rate during red periods at that node.

		Throughout the remainder of the appendix, we assume that the inequations in \eqref{Eqn:StabilityCondition} are satisfied. Invoking \eqref{Eqn:AverageDuration}, the two inequations in \eqref{Eqn:StabilityCondition} may be equivalently expressed as
		\begin{equation}\label{stab2}
		\begin{split}
			G_n &\geq \frac{\lambda_0}{1 - \lambda_0} \left(R_n + Y_n + O_n\right), ~n \in \{ 1, ..., N \},\\
			R_n &\geq \frac{\lambda_n}{1 - \lambda_n} \left(G_n + Y_n + O_n\right), ~n \in \{ 1, ..., N \},
		\end{split}
		\end{equation}
		or in the following forms
		\begin{equation}\label{stab4}
		\begin{split}
			R_n &\leq \frac{1 - \lambda_0}{\lambda_0} G_n - Y_n - O_n, ~n \in \{ 1, ..., N \},\\
			G_n &\leq \frac{1 - \lambda_n}{\lambda_n} R_n - Y_n - O_n, ~n \in \{ 1, ..., N \}.
		\end{split}
		\end{equation}
		Considering that
		\begin{eqnarray}
			&& 1 - \frac{G_n}{U_n} - \frac{R_n}{U_n} = \frac{Y_n + O_n}{U_n} \overset{\eqref{Eqn:StabilityCondition}}{\leq} 1 - \lambda_0 - \lambda_n,
		\end{eqnarray}
		therefore,
		\begin{equation}
			U_n \geq \frac{Y_n + O_n}{1 - \lambda_0 - \lambda_n}. \label{Eqn:un_geq}
		\end{equation}
		Further, from \eqref{Eqn:StabilityCondition} and \eqref{Eqn:un_geq}, we obtain
		\begin{eqnarray}
			&& G_n \geq \frac{\lambda_0}{1 - \lambda_0 - \lambda_n}(Y_n + O_n), \label{Eqn:T0Geq}\\
			&& R_n \geq \frac{\lambda_n}{1 - \lambda_0 - \lambda_n}(Y_n + O_n). \label{Eqn:T1Geq}
		\end{eqnarray}
		Note that \eqref{Eqn:AverageDuration} and \eqref{Eqn:StabilityCondition} together yield
		\begin{equation}
			\lambda_0 + \lambda_n \leq 1 - \frac{Y_n + O_n}{U_n} < 1, \label{Eqn:Lambda0Lambdan<1}
		\end{equation}
		as assumed above, and the next lemma states some further useful inequations, whose proofs are provided in the appendix.

		\begin{lemm}
			\label{Lemma:LowerBound1}
			For all $n \in \{ 1, ..., N \}$,
			\begin{equation}
				\frac{(G_n + Y_n + O_n)^2}{U_n} \geq \frac{\lambda_0 (p_n + 1)^2 (Y_n + O_n)}{p_n},
				\label{Eqn:T0}
			\end{equation}
			\begin{equation}
				\frac{(R_n + Y_n + O_n)^2}{U_n} \geq \frac{\lambda_n (q_n + 1)^2 (Y_n + O_n)}{q_n},
				\label{Eqn:T1}
			\end{equation}
			\begin{equation}
				p_n = \max\{1, \frac{\lambda_0}{1 - \lambda_0 - \lambda_n}\},~~ q_n = \max \{ 1, \frac{\lambda_n}{1 - \lambda_0 - \lambda_n} \}. \nonumber
			\end{equation}
			The above inequations are strict unless $G_n = p_n (Y_n + O_n)$ and $R_n = q_n (Y_n + O_n)$, respectively.
		\end{lemm}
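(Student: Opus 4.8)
The plan is to prove each inequality by first collapsing the cycle length $U_n$ in the denominator to a single-variable expression via the stability bound, and then invoking a one-line convexity estimate. Consider \eqref{Eqn:T0}. The stability requirement $\tfrac{G_n}{U_n}\ge\lambda_0$ in \eqref{Eqn:StabilityCondition} (equivalently the first line of \eqref{stab4}) says exactly $U_n\le G_n/\lambda_0$, so
\[
\frac{(G_n+Y_n+O_n)^2}{U_n}\;\ge\;\frac{\lambda_0\,(G_n+Y_n+O_n)^2}{G_n}.
\]
Doing the analogous step for \eqref{Eqn:T1} with $\tfrac{R_n}{U_n}\ge\lambda_n$ (second line of \eqref{stab4}) gives $U_n\le R_n/\lambda_n$ and the lower bound $\lambda_n(R_n+Y_n+O_n)^2/R_n$. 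This is the one substantive move: it avoids having to jointly optimize the ratio over the feasible $(G_n,R_n)$ region.

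Next I would normalize. Set $g_n:=G_n/(Y_n+O_n)>0$ and let $h(x):=x+2+\tfrac1x=\tfrac{(x+1)^2}{x}$, which is strictly convex on $(0,\infty)$, strictly decreasing on $(0,1]$ and strictly increasing on $[1,\infty)$, with $h(1)=4$. Then $\tfrac{\lambda_0(G_n+Y_n+O_n)^2}{G_n}=\lambda_0(Y_n+O_n)\,h(g_n)$, and the target \eqref{Eqn:T0} is precisely $h(g_n)\ge h(p_n)$. Now use the already-established bound \eqref{Eqn:T0Geq}, which reads $g_n\ge\rho_n:=\tfrac{\lambda_0}{1-\lambda_0-\lambda_n}$, together with $p_n=\max\{1,\rho_n\}$: if $\rho_n\le1$ then $p_n=1$ and $h(g_n)\ge h(1)=h(p_n)$ for every $g_n>0$; if $\rho_n>1$ then $p_n=\rho_n$, and since $g_n\ge\rho_n\ge1$ and $h$ increases on $[1,\infty)$ we get $h(g_n)\ge h(\rho_n)=h(p_n)$. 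This proves \eqref{Eqn:T0}. Inequality \eqref{Eqn:T1} follows verbatim with $R_n,\lambda_n,q_n$ replacing $G_n,\lambda_0,p_n$, using \eqref{Eqn:T1Geq} in place of \eqref{Eqn:T0Geq} and $\sigma_n:=\tfrac{\lambda_n}{1-\lambda_0-\lambda_n}$ in place of $\rho_n$.

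For the strictness assertion, observe that in both branches $h(g_n)=h(p_n)$ forces $g_n=p_n$ (in the first branch because $1$ is the unique minimizer of $h$; in the second because $h$ is strictly increasing on $[\rho_n,\infty)$, which contains $g_n$), so whenever $G_n\ne p_n(Y_n+O_n)$ the chain of inequalities is strict, and symmetrically for $R_n\ne q_n(Y_n+O_n)$. I do not expect a genuine difficulty in this lemma; the points that need attention are merely (i) verifying that $U_n\le G_n/\lambda_0$ is nothing more than the standing stability hypothesis, so no additional assumption is smuggled in, and (ii) correctly pairing the ``$\max$'' in the definitions of $p_n,q_n$ with the two monotonicity regimes of $h$, invoking \eqref{Eqn:T0Geq}–\eqref{Eqn:T1Geq} exactly when $h$ is not monotone all the way down to $g_n$.
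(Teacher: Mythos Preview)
Your proposal is correct and follows essentially the same route as the paper: first use the stability constraint $G_n/U_n\ge\lambda_0$ to replace $1/U_n$ by $\lambda_0/G_n$, then minimize the resulting one-variable expression over the feasible range $g_n\ge\rho_n$ from \eqref{Eqn:T0Geq}. The only cosmetic difference is that the paper argues the monotonicity of $(G_n+Y_n+O_n)^2/G_n$ by direct differentiation, whereas you package the same computation via the function $h(x)=(x+1)^2/x$; the case split on $\rho_n\lessgtr1$ and the strictness argument are identical in substance.
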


		\begin{proof}
			We only show the proof of \eqref{Eqn:T0} since the proof of \eqref{Eqn:T1} is similar. We have
			\begin{equation}
				\frac{(G_n + Y_n + O_n)^2}{U_n} \overset{\eqref{Eqn:StabilityCondition}}{\geq} \frac{\lambda_0 (G_n + Y_n + O_n)^2}{G_n}. \nonumber
			\end{equation}
			Differentiation shows that the right-hand size is increasing in $G_n$ for $G_n \geq Y_n + O_n$, and decreasing in $G_n$ for $G_n < Y_n + O_n$. Thus, combining the lower bound of $G_n$ in \eqref{Eqn:T0Geq}, the minimum value of the right-hand side is attained for $G_n = p_n (Y_n + O_n)$ with $p_n = \max\{1, \frac{\lambda_0}{1 - \lambda_0 - \lambda_n}\}$, and equals
			\begin{equation}
				\frac{\lambda_0 (p_n + 1)^2 (Y_n + O_n)}{p_n}, \nonumber
			\end{equation}
			yielding the lower bound~\eqref{Eqn:T0}. It also follows that the lower bound is strict unless $G_n = p_n (Y_n + O_n)$.
		\end{proof}

		\noindent{\textbf{C. Queue Evolution}}

		Denote by $\widehat{\Phi}_n(b, t)$ and $\widehat{\Psi}_n(b, t)$ the queue length of the avenue and the cross traffic at node~$n$ at time~$t$ with the traffic flow begins from the time~$b$, respectively. Let $\Phi_n(b, t)$ and $\Psi_n(b, t)$ be the lower bounds of $\widehat{\Phi}_n(b, t)$ and $\widehat{\Psi}_n(b, t)$. If the traffic flow starts from the time~$b = 0$, $\widehat{\Phi}_n(b, t)$, $\widehat{\Psi}_n(b, t)$, $\Phi_n(b, t)$ and $\Psi_n(b, t)$ are simplified as $\widehat{\Phi}_n(t)$, $\widehat{\Psi}_n(t)$, $\Phi_n(t)$ and $\Psi_n(t)$, respectively. Denote by $\overline{G}_n(b, t)$ and $\overline{R}_n(b, t)$ the amount of non-green and non-red time at node~$n \in \{ 1, ..., N \}$, during the time interval $(b, t]$, respectively. Now, observe that $\widehat{\Phi}_1(b, t)$ increases at rate~$\lambda_0$ during non-green periods at node~1 and decreases at rate $1 - \lambda_0$ during green periods. Thus,
		\begin{equation}
			\begin{split}
				\widehat{\Phi}_1(b, t) & \geq \lambda_0 \overline{G}_1(b, t) - (1 - \lambda_0) (t - b - \overline{G}_1(b, t)) \\
				& =\overline{G}_1(b, t) - (1 - \lambda_0) (t - b), \label{Eqn:Phi1stGeq}
			\end{split}
		\end{equation}
		for all $b \leq t$, yielding
		\begin{equation}
			\widehat{\Phi}_1(b, t) \geq \sup_{s \leq t} \{ \overline{G}_1(b, t) - (1 - \lambda_0) (t - b) \},
		\end{equation}
		and assuming $\widehat{\Phi}_1(0) = 0$, we in fact have
		\begin{equation}
			\Phi_1(t) = \sup_{s \leq t} \{ \overline{G}_1(b, t) - (1 - \lambda_0) (t - b) \}.
		\end{equation}

		\begin{remark} \label{Remark:Phi1t}
		Without loss of generality, we assume that each control cycle ends with a green period, and thus starts with a yellow period. Denote by $B^{(k)}$ the start time of the $k$-th control cycle at node~1. It can be shown that
		\begin{eqnarray}
			&& \hspace{-0.5in} \Phi_1(t) = \sup_{s \leq t} \{\overline{G}_1(b, t) - (1 - \lambda_0) (t - b)\} \nonumber \\
			&& \hspace{-0.5in} = \max\{\sup_{l \in \{ 1, ..., \kappa(t) \} } \{\overline{G}_1(B^{(l)}, t) \! - \! (1 \! - \! \lambda_0) (t \! - \! B^{(l)})\}, \! 0\}, \label{Eqn:Phi1tEqual}
		\end{eqnarray}
		where $\kappa(t)$ is the maximum number of cycles until time $t$.
		\end{remark}
		First, if $b \in [B^{(l-1)} + \overline{G}_1^{(l-1)}, B^{(l)}]$, thus $\overline{G}_1(b, t) = \overline{G}_1(B^{(l)}, t)$, and $\overline{G}_1(b, t) - (1 - \lambda_0) (t - b) \leq \overline{G}_1(B^{(l)}, t) - (1 - \lambda_0) (t - B^{(l)})$. Secondly, if $b \in [B^{(l)}, B^{(l)} + \overline{G}_1^{(l)}]$, thus $\overline{G}_1(b, t) = \overline{G}_1(B^{(l)}, t) - (b - B^{(l)})$ with $b - B^{(l)} \geq 0$, and $\overline{G}_1(b, t) - (1 - \lambda_0) (t - b) \leq \overline{G}_1(B^{(l)}, t) - (1 - \lambda_0) (t - B^{(l)})$. The above two cases are finally merged as $\overline{G}_1(b, t) - (1 - \lambda_0) (t - b) \leq \overline{G}_1(B^{(l)}, t) - (1 - \lambda_0) (t - B^{(l)})$ for all $b \in [B^{(l-1)} + \overline{G}_1^{(l-1)}, B^{(l)} + \overline{G}_1^{(l)}]$. We see that, $\overline{G}_1(b, t) - (1 - \lambda_0) (t - b) \leq \overline{G}_1(B^{(l)}, t) - (1 - \lambda_0) (t - B^{(l)})$ is always satisfied for each cycle $l \in \{ 1, ..., \kappa(t) \}$.

		Likewise, at node $n$, $\widehat{\Psi}_n(b, t)$ increases at rate~$\lambda_n$ during non-red periods and decreases at rate~$1 - \lambda_n$ during red periods. Thus,
		\begin{eqnarray}
			\hspace{-0.3in} \widehat{\Psi}_n(b, t) \! &\geq& \! \lambda_n \overline{R}_n(b, t) \! - \! (1 - \lambda_n) (t - b - \overline{R}_n(b, t)) \nonumber \\
			\! &=&\! \overline{R}_n(b, t) - (1 - \lambda_n) (t - b),
		\end{eqnarray}
		for all $b \leq t$, yielding
		\begin{equation}
			\widehat{\Psi}_n(t) \geq \sup_{s \leq t} \{\overline{R}_n(b, t) - (1 - \lambda_n) (t - b)\},
		\end{equation}
		and assuming $\widehat{\Psi}_n(0) = 0$, we in fact have
		\begin{equation}
			\Psi_n(t) = \sup_{s \leq t} \{\overline{R}_n(b, t) - (1 - \lambda_n) (t - b)\}.
		\end{equation}

		The dynamics of $\widehat{\Phi}_n(t)$ are similar but also different, since the inflow rate of avenue traffic at node~$n$ is not fixed, and equal to the outflow rate of the upstream node $n - 1$. Denote by $\eta_n(t)$ the outflow rate of avenue traffic at node $n$ at time~$t$, and note that $\eta_n(t) = 0$ during non-green periods, $\eta_n(t) = 1$ during green periods when $\widehat{\Phi}_n(t) > 0$, and $\eta_n(t) = \eta_{n-1}(t)$ during green periods, and $\eta_n(t) = 0$ when $\widehat{\Phi}_n(t) = 0$ in all periods. Assuming $\widehat{\Phi}_n(0) = 0$, we have
		\begin{equation}
			\Phi_n(t) = \sup_{s \leq t} \{\int_{x = 0}^{t} \eta_{n - 1}(x) dx - (t - b - \overline{G}_n(b, t))\}.
		\end{equation}
		Define
		\begin{eqnarray}
			&& \Phi_n = \lim\sup_{T \to \infty} \frac{1}{T} \int_{t = 0}^{T} \Phi_n(t) dt, \nonumber \\
			&& \Psi_n = \lim\sup_{T \to \infty} \frac{1}{T} \int_{t = 0}^{T} \Psi_n(t) dt, \nonumber
		\end{eqnarray}
		as the long-term average queue length of the avenue and the cross traffic at node~$n$, respectively. The lower bound of $\Phi_1(t)$ is obtained by Lemma~\ref{Lemma:Phi1tGeq}.

		\begin{lemm} \label{Lemma:Phi1tGeq}
		\begin{eqnarray}
			&& \hspace{-0.62in} \Phi_1(t) \geq \sum_{k = 1}^{\infty} \max\{\overline{G}_1^{(k)}(t) - (1 - \lambda_0) U^{(k)}, 0\},  \label{Eqn:Phi1tGeq}
		\end{eqnarray}
		which is further deduced as
		\begin{eqnarray}
			&& \hspace{-0.3in} \Phi_1(t) \geq \sum_{k = 1}^{\infty} \max\{\overline{G}_1^{(k)}(t) - (1 - \lambda_0) |t - B^{(k)}|, 0\}. \label{Eqn:Phi1tGeq__}
		\end{eqnarray}
		\end{lemm}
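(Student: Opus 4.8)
The plan is to read the lower‑bounding avenue queue $\Phi_1$ at node~$1$ as a reflected (Lindley/Skorokhod) fluid recursion over control cycles and then unroll it, taking the identity for $\Phi_1(t)$ from Remark~\ref{Remark:Phi1t} (equivalently $\Phi_1(t)=N(t)-\inf_{b\le t}N(b)$ with netput $N(t)=\overline{G}_1(0,t)-(1-\lambda_0)t$) as the driving fact. First I would observe that, since each control cycle at node~$1$ opens with its non‑green portion (yellow, red, orange, of total length $\overline{G}_1^{(k)}$) and closes with its green portion (of length $G_1^{(k)}=U^{(k)}-\overline{G}_1^{(k)}$), the process $\Phi_1$ grows at rate $\lambda_0$ throughout the non‑green part of the $k$‑th cycle and then falls at rate $1-\lambda_0$ throughout the green part as long as it is positive. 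Hence the net drift of $\Phi_1$ over cycle~$k$ equals exactly $\overline{G}_1^{(k)}-(1-\lambda_0)U^{(k)}$, and because $\Phi_1$ is pinned at $0$ from below this gives the one‑step recursion $\Phi_1(\text{end of cycle }k)=\max\{0,\,\Phi_1(\text{end of cycle }k-1)+\overline{G}_1^{(k)}-(1-\lambda_0)U^{(k)}\}$ with $\Phi_1(0)=0$; evaluating at a general time $t$ inside cycle $\kappa(t)$ just replaces $\overline{G}_1^{(\kappa(t))}$ by its elapsed part $\overline{G}_1^{(\kappa(t))}(t)$.

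Second, I would unroll this recursion back to time~$0$. The reflection at~$0$ only pushes $\Phi_1$ above the running partial sums of the per‑cycle increments, which is precisely what lets the accumulated non‑negative contributions of the earlier cycles persist in the bound, yielding $\Phi_1(t)\ge\sum_{k=1}^{\kappa(t)}\max\{\overline{G}_1^{(k)}(t)-(1-\lambda_0)U^{(k)},0\}$. Extending the sum to $k=\infty$ is harmless: every not‑yet‑started cycle contributes a strictly negative term, and for the in‑progress cycle $\kappa(t)$ the stability condition \eqref{Eqn:StabilityCondition} gives $\overline{G}_1^{(\kappa(t))}(t)\le\overline{G}_1^{(\kappa(t))}=U^{(\kappa(t))}-G_1^{(\kappa(t))}\le(1-\lambda_0)U^{(\kappa(t))}$, so that term is $\le 0$ as well; this is \eqref{Eqn:Phi1tGeq}. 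The passage to \eqref{Eqn:Phi1tGeq__} is then immediate, since for every cycle $k$ that has already been completed by time~$t$ the interval $(B^{(k)},t]$ contains the whole $k$‑th cycle, hence $|t-B^{(k)}|=t-B^{(k)}\ge U^{(k)}$, so replacing $U^{(k)}$ by $|t-B^{(k)}|$ only decreases each summand before the outer $\max\{\cdot,0\}$ and the inequality is preserved.

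The step I expect to be the real obstacle is the unrolling in the second paragraph: turning the reflected recursion into the \emph{sum} of the non‑negative per‑cycle increments rather than merely the largest suffix sum $\max_{l}\{\overline{G}_1(B^{(l)},t)-(1-\lambda_0)(t-B^{(l)})\}$ that Remark~\ref{Remark:Phi1t} hands us directly. The way I would close this is to let $k_0(t)$ be the index of the cycle containing the last zero of $\Phi_1$ before~$t$, split the horizon into $[0,B^{(k_0(t))}]$ and the ``busy'' tail, argue that on the busy tail the positive drifts simply accumulate because $\Phi_1$ never returns to $0$ there while discarding the negative‑drift cycles can only lower the right‑hand side, and note that the cycles before $k_0(t)$ contribute $0$ to the sum in the way $\overline{G}_1^{(k)}(t)$ is accounted; a short convexity/monotonicity check on the busy tail then matches the bound in \eqref{Eqn:Phi1tGeq}.
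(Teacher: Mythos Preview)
Your diagnosis of the obstacle is exactly right, but the fix you sketch does not close it, and in fact \eqref{Eqn:Phi1tGeq} as stated is false. Take $\lambda_0=\tfrac12$ and three cycles with $(\overline{G}_1^{(k)},G_1^{(k)})=(2,1),(0.2,10),(2,1)$; the per-cycle increments $\overline{G}_1^{(k)}-(1-\lambda_0)U^{(k)}$ are $0.5,\ -4.9,\ 0.5$, so at $t=B^{(4)}$ the right side of \eqref{Eqn:Phi1tGeq} equals $1$, whereas the reflected process hits $0$ during cycle~$2$ and ends cycle~$3$ at $\Phi_1(t)=0.5<1$. Your last-zero argument breaks precisely here: cycle $k=1$ lies before $k_0(t)$, yet $\overline{G}_1^{(1)}(t)=\overline{G}_1^{(1)}=2$ does not vanish, so that summand still contributes $0.5$ to the right-hand side irrespective of what happens on the busy tail. (A smaller side issue: \eqref{Eqn:StabilityCondition} constrains only the long-run averages $G_n,U_n$, not the individual $G_1^{(k)},U^{(k)}$, so it cannot be invoked to force $\overline{G}_1^{(\kappa(t))}\le(1-\lambda_0)U^{(\kappa(t))}$ as you do.)

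The paper's own argument stumbles at the identical step. It picks $l=1$ in Remark~\ref{Remark:Phi1t} to obtain $\Phi_1(t)\ge\sum_{k\le\kappa(t)}\bigl[\overline{G}_1^{(k)}(t)-(1-\lambda_0)U^{(k)}\bigr]$ \emph{without} the outer $\max\{\cdot,0\}$, and then inserts the $\max$; but inserting $\max\{\cdot,0\}$ can only enlarge the right side, so the inequality no longer follows. What Proposition~\ref{Proposition:QueueLengthLowerBound} actually consumes is only \eqref{Eqn:Phi1tGeq__}, in which the penalty $(1-\lambda_0)|t-B^{(k)}|$ grows with the age of the cycle and kills the old contributions (in the counterexample above the right side of \eqref{Eqn:Phi1tGeq__} equals $0.5$, matching $\Phi_1(t)$). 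If you want to rescue the proposal, aim directly for \eqref{Eqn:Phi1tGeq__}, or more safely for its time-integrated form over intervals ending at a regeneration time of the Lindley chain, which is all the downstream proof requires and which your recursion viewpoint handles cleanly.
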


		\begin{proof}
			Without loss of generality we assume that each control cycle ends with a green period (and thus starts with a yellow period). For compactness, denote by $\overline{G}_1^{(k)} = R_1^{(k)} + Y_1 + O_1$ the amount of non-green time during the $k$-th control cycle at node~$1$. Denote by $B^{(k)}$ the start time of the $k$-th control cycle at node~$1$. Define
			\begin{equation}
				\overline{G}_1^{(k)}(t) = \max\{\min\{\overline{G}_1^{(k)}, t - B^{(k)}\}, 0\}, \nonumber
			\end{equation}
			as the amount of non-green time during the $k$-th control cycle at node~$1$ that has elapsed by time~$t$, or equivalently,
			\begin{equation} \label{Eqn:bar_g1}
			\overline{G}_1^{(k)}(t) = \left\{\begin{array}{ll}
												 0, & ~\text{if}~ t \leq B^{(k)}, \\
												 t - B^{(k)}, & ~\text{if}~ B^{(k)} \leq t \leq B^{(k)} + \overline{G}_1^{(k)}, \\
												 \overline{G}_1^{(k)}, & ~\text{if}~ t \geq B^{(k)} + \overline{G}_1^{(k)}. \end{array}\right.
			\end{equation}

			Define $\overline{G}_1^{(k)}(b, t) = \overline{G}_1^{(k)}(t) - \overline{G}_1^{(k)}(b)$ as the amount of non-green time during the $k$-th control cycle that occurs in the time interval $(b, t]$ at node 1, and note that $\overline{G}_1(b, t) = \sum_{k = 1}^{\infty} \overline{G}_1^{(k)}(b, t)$ is total amount of non-green time that occurs in the time interval $(b, t]$ at node 1. Let $U^{(k)}$ be the duration of the $k$-th cycle.

			\begin{eqnarray}
				&& \hspace{-0.38in} \Phi_1(t)  \overset{\eqref{Eqn:Phi1tEqual}}{=} \sup_{l \in \{ 1, ..., \kappa(t) \} } \{\overline{G}_1(B^{(l)}, t) \! - \! (1 \! - \! \lambda_0) (t \! - \! B^{(l)})\}  \nonumber \\
				&& \geq \overline{G}_1^{(1)}(B^{(1)}, t) -  (1 - \lambda_0) (t-  B^{(1)})  \nonumber \\
				&& = \sum_{k = 1}^{\kappa(t)} [\overline{G}_1^{(k)}(t) - (1 - \lambda_0) U^{(k)}]  \label{Eqn:Phi1tGeq+} \\
				&& \geq \sum_{k = 1}^{\kappa(t)} [\overline{G}_1^{(k)}(t) - (1 - \lambda_0) (t - B^{(k)})].   \label{Eqn:Phi1tGeq_}
			\end{eqnarray}
			When $k \geq \kappa(t) + 1$, $\overline{G}_1^{(k)}(t) = 0$, and $\overline{G}_1^{(k)}(t) - (1 - \lambda_0) U^{(k)} < 0$. Therefore, combining \eqref{Eqn:Phi1tGeq+}, \eqref{Eqn:Phi1tGeq} is obtained. When $k = \kappa(t) + 1$, $t -  B^{(k)} = 0$. When $k \geq \kappa(t) + 2$, $t -  B^{(k)} < 0$, and $|t -  B^{(k)}| > 0$. Therefore, $\sum_{k = 1}^{\kappa(t)} [\overline{G}_1^{(k)}(t) - (1 - \lambda_0) (t - B^{(k)})] = \sum_{k = 1}^{\infty} \max\{\overline{G}_1^{(k)}(t) - (1 - \lambda_0) |t -  B^{(k)}|, 0\}$. Combining \eqref{Eqn:Phi1tGeq_}, \eqref{Eqn:Phi1tGeq__} is obtained.

			Now observe that when $B^{(k)} \leq t \leq B^{(k)} + \frac{1}{1 - \lambda_0} \overline{G}_1^{(k)}$, we have
			\begin{eqnarray}
				&& \hspace{-0.29in} \max\{\overline{G}_1^{(k)}(B^{(k)}, t) - (1 - \lambda_0) |t - B^{(k)}|, 0\} \nonumber \\
				&& \hspace{-0.29in} = \overline{G}_1^{(k)}(B^{(k)}, t) - (1 - \lambda_0) (t - B^{(k)}) \nonumber \\
				&& \hspace{-0.29in} =\left\{ \begin{array}{ll}
												 \lambda_0 (t - B^{(k)}) - \overline{G}_1^{(k)} (B^{(k)}) , &\text{if}~B^{(k)} \leq t \leq B^{(k)} + \overline{G}_1^{(k)}, \\
												 \overline{G}_1^{(k)} - (1 - \lambda_0) (t - B^{(k)}) & \\
												 ~~~~ - \overline{G}_1^{(k)} (B^{(k)}) , & \text{if}~B^{(k)} + \overline{G}_1^{(k)} \leq t \\
												 & ~~~~ \leq B^{(k)} + \frac{1}{1 - \lambda_0} \overline{G}_1^{(k)}.
				\end{array}
				\right. \nonumber
			\end{eqnarray}
		\end{proof}

		\begin{proposition}
			\label{Proposition:QueueLengthLowerBound}
			For $n \in \{ 1, ..., N \}$, we have
			\begin{equation}
				\begin{split}
					\Phi_1 &\geq
					\frac{\lambda_0 (R_1 + Y_1 + O_1)^2}{2 (1 - \lambda_0) U_1},  \\
					\Psi_n &\geq
					\frac{\lambda_n (G_n + Y_n + O_n)^2}{2 (1 - \lambda_n) U_n}.
				\end{split}
			\end{equation}
		\end{proposition}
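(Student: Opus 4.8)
The plan is to integrate the pointwise lower bound on $\Phi_1(t)$ supplied by Lemma~\ref{Lemma:Phi1tGeq} and then pass to long-run averages, exploiting that each summand in \eqref{Eqn:Phi1tGeq__} is a triangular pulse. Fix a cycle index $k$ at node~$1$ and write $\overline{G}_1^{(k)} = R_1^{(k)} + Y_1 + O_1$ for its non-green duration. The case analysis already carried out at the end of the proof of Lemma~\ref{Lemma:Phi1tGeq} (using $\overline{G}_1^{(k)}(B^{(k)}) = 0$) shows that, as a function of $t$, the $k$-th summand $\max\{\overline{G}_1^{(k)}(t) - (1-\lambda_0)\,|t - B^{(k)}|,\,0\}$ vanishes for $t \le B^{(k)}$, rises with slope $\lambda_0$ on $[B^{(k)},\,B^{(k)} + \overline{G}_1^{(k)}]$ up to the peak $\lambda_0 \overline{G}_1^{(k)}$, then falls with slope $-(1-\lambda_0)$ back to zero at $t = B^{(k)} + \tfrac{1}{1-\lambda_0}\overline{G}_1^{(k)}$, and stays zero thereafter. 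Its total mass is therefore the area of a triangle of height $\lambda_0 \overline{G}_1^{(k)}$ and base $\tfrac{1}{1-\lambda_0}\overline{G}_1^{(k)}$, namely $\tfrac{\lambda_0 (\overline{G}_1^{(k)})^2}{2(1-\lambda_0)}$.

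Next I would integrate \eqref{Eqn:Phi1tGeq__} term by term over $[0,T]$, which is legitimate since all summands are nonnegative, obtaining $\int_0^T \Phi_1(t)\,dt \ge \sum_{k} P_k(T) \ge \sum_{k \le \kappa(T)} \tfrac{\lambda_0(\overline{G}_1^{(k)})^2}{2(1-\lambda_0)}$, where $P_k(T)$ denotes the integral over $[0,T]$ of the $k$-th summand and $\kappa(T)$ counts the cycles at node~$1$ whose pulse lies inside $[0,T]$. Dividing by $T$ and writing $\tfrac{1}{T}\sum_{k\le\kappa(T)}(\cdot) = \tfrac{\kappa(T)}{T}\cdot\tfrac{1}{\kappa(T)}\sum_{k\le\kappa(T)}(\cdot)$, convexity of $x \mapsto x^2$ gives $\tfrac{1}{\kappa(T)}\sum_{k\le\kappa(T)}(\overline{G}_1^{(k)})^2 \ge \big(\tfrac{1}{\kappa(T)}\sum_{k\le\kappa(T)}\overline{G}_1^{(k)}\big)^2$, while $\tfrac{1}{\kappa(T)}\sum_{k\le\kappa(T)}\overline{G}_1^{(k)} = \tfrac{1}{\kappa(T)}\sum_{k\le\kappa(T)}R_1^{(k)} + Y_1 + O_1 \to R_1 + Y_1 + O_1$ and $\kappa(T)/T \to 1/U_1$, both by the appendix's standing assumption that the relevant long-run averages exist. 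Letting $T \to \infty$ yields $\Phi_1 \ge \tfrac{1}{U_1}\cdot\tfrac{\lambda_0 (R_1 + Y_1 + O_1)^2}{2(1-\lambda_0)}$, the first inequality.

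The bound on $\Psi_n$ follows by the mirror-image argument at node~$n$ with green and red — hence $\lambda_0$ and $\lambda_n$ — interchanged: the $\Psi_n$-analogue of Lemma~\ref{Lemma:Phi1tGeq}, with a verbatim proof, gives $\Psi_n(t) \ge \sum_k \max\{\overline{R}_n^{(k)}(t) - (1-\lambda_n)|t - \widetilde{B}^{(k)}|, 0\}$, where $\overline{R}_n^{(k)} = G_n^{(k)} + Y_n + O_n$ is the non-red duration of the $k$-th cycle and $\widetilde{B}^{(k)}$ is the start of that cycle when cycles are cut so as to end with a red period; each summand is again a triangular pulse, now of height $\lambda_n \overline{R}_n^{(k)}$ and mass $\tfrac{\lambda_n (\overline{R}_n^{(k)})^2}{2(1-\lambda_n)}$, and averaging exactly as above (using $\tfrac{1}{K}\sum_{k\le K} G_n^{(k)} \to G_n$) delivers $\Psi_n \ge \tfrac{\lambda_n (G_n + Y_n + O_n)^2}{2(1-\lambda_n)U_n}$.

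I expect the main obstacle to lie in the bookkeeping of the averaging step — rigorously bounding $\tfrac{1}{T}\int_0^T\Phi_1(t)\,dt$ below by the cycle-average of the pulse masses. Two points need care. First, a pulse can overhang the boundary of its own control cycle when the per-cycle version of the stability inequality $G_1^{(k)} \ge \tfrac{\lambda_0}{1-\lambda_0}\overline{G}_1^{(k)}$ fails (only the long-run form \eqref{stab2} is assumed); this is handled by integrating each pulse over all of $[0,T]$ rather than over its own cycle, so that only the single pulse straddling time $T$ gets truncated, contributing an $o(T)$ error after division by $T$. Second, one must convert the cycle index into the time variable, i.e., establish $\kappa(T)/T \to 1/U_1$, which is precisely where the appendix's standing assumption on the existence of the long-run average cycle length $U_1$ does the real work.
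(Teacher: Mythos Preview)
Your overall strategy matches the paper's: invoke Lemma~\ref{Lemma:Phi1tGeq}, recognise each summand as a triangular pulse of mass $\tfrac{\lambda_0(\overline{G}_1^{(k)})^2}{2(1-\lambda_0)}$, integrate, and pass to long-run averages (your explicit use of Jensen is actually cleaner than the paper's final equality, which silently relies on it). The gap is exactly where you feared, in the overhang bookkeeping, and your proposed fix does not work as stated. You assert that ``only the single pulse straddling time~$T$ gets truncated,'' but pulses need not end in the order they start: the $k$-th pulse ends at $B^{(k)}+\tfrac{1}{1-\lambda_0}\overline{G}_1^{(k)}$, and whenever the per-cycle inequality $G_1^{(k)}\ge \tfrac{\lambda_0}{1-\lambda_0}\overline{G}_1^{(k)}$ fails for several consecutive cycles, multiple pulses overhang simultaneously. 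Consequently the set of pulses fully inside $[0,T]$ is not an initial segment $\{1,\dots,\kappa(T)\}$, and the excluded pulses are systematically the large-$\overline{G}_1^{(k)}$ ones, so neither $\kappa(T)/T\to 1/U_1$ nor the convergence of $\tfrac{1}{\kappa(T)}\sum_{k\le\kappa(T)}\overline{G}_1^{(k)}$ to $R_1+Y_1+O_1$ is justified by the argument you give.

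The paper handles this by restricting to a subsequence of regeneration times rather than arguing for general~$T$: it sets $E^{(k)}=\overline{G}_1^{(k)}-(1-\lambda_0)U^{(k)}$ and $F^{(k)}=\sup_{l\le k}\sum_{m=l}^{k}E^{(m)}$, and observes that whenever $F^{(k)}=0$ one has $B^{(l)}+\tfrac{1}{1-\lambda_0}\overline{G}_1^{(l)}\le B^{(k+1)}$ for every $l\le k$, so \emph{all} prior pulses lie inside $[0,B^{(k+1)}]$. Strict stability gives $\tfrac{1}{K}\sum E^{(k)}\to \lambda_0 U_1-G_1<0$, so such~$k$ occur infinitely often, and since $\Phi_1$ is a $\limsup$ one may evaluate along $T=B^{(k+1)}$ on this subsequence without loss. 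Your argument can be repaired along similar lines (or, alternatively, by taking $T_n=B^{(n+1)}+\tfrac{1}{1-\lambda_0}\max_{k\le n}\overline{G}_1^{(k)}$ and using that $\max_{k\le n}\overline{G}_1^{(k)}=o(n)$, which follows from the existence of the Ces\`aro limit), but the ``single truncated pulse'' claim as written is the missing idea.
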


		\begin{proof}

			Define
			\begin{equation} \nonumber
			E^{(k)} = \overline{G}_1^{(k)} - (1 - \lambda_0) U^{(k)},
			\end{equation}
			and
			\begin{equation} \nonumber
			F^{(k)} = \sup_{l \in \{ 1, ..., k \}} \sum_{m = l}^{k} E^{(m)} ,
			\end{equation}
			and consider $k$ such that $F^{(k)} = 0$, i.e., $\sum_{m = l}^{k} E^{(m)} \leq 0$ for all $l \in \{ 1, ..., k \}$,
			implying
			\begin{equation}
				\begin{split}
					\overline{G}_1^{(l)} - (1 - \lambda_0) \sum_{m = l}^{k} U^{(m)} & \leq \sum_{m = l}^{k} \overline{G}_1^{(m)} - (1 - \lambda_0) \sum_{m = l}^{k} U^{(m)} \\
					&= \sum_{m = l }^{k} E^{(m)} \leq 0,
				\end{split}
			\end{equation}
			i.e.,
			\begin{equation} \nonumber
			\overline{G}_1^{(l)} \leq (1 - \lambda_0) \sum_{m = l }^{k} U^{(m)},
			\end{equation}
			and thus
			\begin{equation} \nonumber
			B^{(l)} + \frac{1}{1 - \lambda_0} \overline{G}_1^{(l)} \leq B^{(l)} + \sum_{m = l}^{k} U^{(m)} = B^{(k+1)}.
			\end{equation}
			Now consider
			\begin{equation}
				\begin{split}
					& \int_{t = 0}^{B^{(k+1)}} \Phi_1(t) dt \\
					& \overset{\eqref{Eqn:Phi1tGeq__}}{\geq} \int_{t = 0}^{B^{(k+1)}} \sum_{l = 1}^{\kappa(t)} \max\{\overline{G}_1^{(l)}(t) - (1 - \lambda_0) |t - B^{(l)}| , 0\}  dt \\
					&=\sum_{l = 1}^{\kappa(t)} \int_{t = 0}^{B^{(k+1)}} \max\{\overline{G}_1^{(l)}(t) - (1 - \lambda_0) |t - B^{(l)}| , 0\} dt .
				\end{split}
			\end{equation}

			Note that
			\begin{eqnarray}
				&& \hspace{-0.3in} \max\{\overline{G}_1^{(l)}(t) - (1 - \lambda_0) |t - B^{(l)}| , 0\} = \nonumber \\
				&& \hspace{-0.3in} \begin{cases}
									   0, & \text{if}~t \leq B^{(l)}, \\
									   \lambda_0 (t - B^{(l)}), & \text{if}~B^{(l)} \leq t \leq B^{(l)} + \overline{G}_1^{(l)}, \\
									   \overline{G}_1^{(l)} - (1 - \lambda_0) (t - B^{(l)}), & \text{if}~B^{(l)} + \overline{G}_1^{(l)} \leq t \\
									   & ~~~~ \leq B^{(l)} + \frac{1}{1 - \lambda_0} \overline{G}_1^{(l)}, \\
									   0, & \text{if}~t \geq B^{(l)} + \frac{1}{1 - \lambda_0} \overline{G}_1^{(l)}.
				\end{cases}
			\end{eqnarray}

			Thus
			\begin{eqnarray}
				&&\int_{t = 0}^{B^{(k+1)}} \max\{\overline{G}_1^{(l)}(t) - (1 - \lambda_0) t, 0\} \nonumber \\
				&&= \int_{t = B^{(l)}}^{B^{(l)} + \overline{G}_1^{(l)}} \lambda_0 (t - B^{(l)}) dt  \nonumber \\
				&&~~~~ + \int_{t = B^{(l)} + \overline{G}_1^{(l)}}^{B^{(l)} + \frac{1}{1 - \lambda_0} \overline{G}_1^{(l)}}[\overline{G}_1^{(l)} - (1 - \lambda_0) (t - B^{(l)})] dt\nonumber \\
				&&= \int_{x = 0}^{\overline{G}_1^{(l)}} \lambda_0 x dx + \int_{x = \overline{G}_1^{(l)}}^{\frac{1}{1 - \lambda_0} \overline{G}_1^{(l)}} [\overline{G}_1^{(l)} - (1 - \lambda_0) x] dx \nonumber \\
				&&= \left[\frac{1}{2} \lambda_0 x^2\right]_{0}^{\overline{G}_1^{(l)}} \! + \! \frac{\lambda_0 \left(\overline{G}_1^{(l)}\right)^2}{1 - \lambda_0} \! - \! \left[\frac{1}{2} (1 - \lambda_0)x^2\right]_{\overline{G}_1^{(l)}}^{\frac{1}{1 - \lambda_0} \overline{G}_1^{(l)}} \nonumber \\
				&&= \frac{1}{2} \lambda_0 \left(\overline{G}_1^{(l)}\right)^2 + \frac{\lambda_0 \left(\overline{G}_1^{(l)}\right)^2}{1 - \lambda_0} - \frac{1}{2} \frac{\left(\overline{G}_1^{(l)}\right)^2}{1 - \lambda_0} \nonumber \\
				&& ~~~ + \frac{1}{2} (1 - \lambda_0) \left( \overline{G}_1^{(l)} \right)^2 \nonumber \\
				&&= \frac{\lambda_0 \left(\overline{G}_1^{(l)}\right)^2}{2 (1 - \lambda_0)} .
			\end{eqnarray}

			We obtain that
			\begin{equation} \nonumber
			\int_{t = 0}^{B^{(k+1)}} \Phi_1(t) dt \! \geq
			\! \sum_{l = 1}^{\kappa(t)} \frac{\lambda_0 \left(\overline{G}_1^{(l)}\right)^2}{2 (1 - \lambda_0)} \! = \!
			\frac{\lambda_0}{2 (1 - \lambda_0)} \sum_{l = 1}^{\kappa(t)} \left( \overline{G}_1^{(l)} \right)^2.
			\end{equation}

			Now observe that
			\begin{eqnarray}
				&& \lim_{K \to \infty} \frac{1}{K} \sum_{k = 1}^{K} E^{(k)} = R_1 + Y_1 + O_1 - (1 - \lambda_0) U_1 \nonumber \\
				&& = - G_1 + \lambda_0 U_1 < 0,
			\end{eqnarray}
			which means that the subsequence of $k$ for which $F^{(k)} = 0$ is infinite, so that
			\begin{eqnarray}
				\Phi_1 &=& \lim\sup_{T \to \infty} \frac{1}{T} \int_{t = 0}^{T} \Phi_1(t) dt \nonumber \\
				&\geq&  \lim\sup_{k \to \infty} \frac{1}{B^{(k+1)}} \int_{t = 0}^{B^{(k+1)}} \Phi_1(t) dt \nonumber \\
				& \geq&  \lim\sup_{k \to \infty} \frac{1}{B^{(k+1)}} \frac{\lambda_0}{2 (1 - \lambda_0)} \sum_{l = 1}^{k} \left(\overline{G}_1^{(l)}\right)^2 \nonumber \\
				&=&  \lim\sup_{k \to \infty} \frac{1}{\sum_{l = 1}^{k} U^{(l)}} \frac{\lambda_0}{2 (1 - \lambda_0)} \sum_{l = 1}^{k} \left(\overline{G}_1^{(l)}\right)^2 \nonumber \\
				&=& \frac{\lambda_0}{2 (1 - \lambda_0)} \lim\sup_{k \to \infty} \frac{1}{\frac{1}{k} \sum_{l = 1}^{k} U^{(l)}} \frac{1}{k} \sum_{l = 1}^{k} \left(\overline{G}_1^{(l)}\right)^2 \nonumber \\
				&=& \frac{\lambda_0}{2 (1 - \lambda_0)} \frac{(R_1 + Y_1 + O_1)^2}{U_1},
			\end{eqnarray}
			as claimed.
		\end{proof}

		Combining Lemma~\ref{Lemma:LowerBound1} and Proposition~\ref{Proposition:QueueLengthLowerBound}, we obtain the lower bounds
		\begin{equation} \nonumber
		\hspace{-0.7in}
		\begin{split}
			\Phi_1 &\geq \frac{\lambda_0}{2 (1 - \lambda_0)}
			\frac{\lambda_0 (p_1 + 1)^2 (Y_1 + O_1)}{p_1} \\
			&= \frac{\lambda_0^2 (p_1 + 1)^2 (Y_1 + O_1)}{2 p_1 (1 - \lambda_0)},
		\end{split}
		\end{equation}
		\begin{equation} \nonumber
		\begin{split}
			\Psi_n &\geq \frac{\lambda_n}{2 (1 - \lambda_n)}
			\frac{\lambda_n (q_n + 1)^2 (Y_n + O_n)}{q_n} \\
			&= \frac{\lambda_n^2 (q_n + 1)^2 (Y_n + O_n)}{2 q_n (1 - \lambda_n)},
			\hspace*{.4in} n \in \{ 1, ..., N \},
		\end{split}
		\end{equation}
		and conclude that the two inequations are strict unless $G_1 = p_1 (Y_1 + O_1)$ and $R_n = q_n (Y_n + O_n)$, respectively.

		We henceforth make an mild assumption that the fixed durations of the yellow and orange periods are identical at all nodes, i.e., $Y_n \equiv Y$ and $O_n \equiv O$ for all $n \in \{ 1, ..., N \}$. The above lower bounds are simplified as
		\begin{eqnarray}
			&& \Phi_1 \geq \frac{\lambda_0^2 (p_1 + 1)^2 (Y + O)}{2 p_1 (1 - \lambda_0)},\label{Eqn:Phi1Geq} \\
			&& \Psi_n \geq \frac{\lambda_n^2 (q_n + 1)^2 (Y + O)}{2 q_n (1 - \lambda_n)}, \label{Eqn:PsinGeq}
		\end{eqnarray}
		which are strict unless $G_1 = p_1 (Y_1 + O_1)$ and $R_n = q_n (Y_n + O_n)$, respectively.


		\noindent{\textbf{D. Optimality of the ``Greenwave" Control Policy}}

		We now consider a greenwave control policy which synchronizes the states across all the nodes, i.e., uses simultaneous green, orange, red and yellow periods. Denote by $\lambda_{\max} = \max_{n \in \{ 1, ..., N \}} \lambda_n$ the maximum arrival rate over all the cross streets. We select a common length of the green period
		\begin{equation}
			G_n \equiv G(\delta) = \frac{\lambda_0 (1 + \delta) (Y + O)}{1 - \lambda_0 - \lambda_{\max}}, \label{Eqn:Gdelta}
		\end{equation}
		and a common length of the red period
		\begin{equation}
			R_n \equiv R(\delta) = \frac{\lambda_{\max} (1 + \delta) (Y + O)}{1 - \lambda_0 - \lambda_{\max}}, \label{Eqn:Rdelta}
		\end{equation}
		where $\delta\geq0$ is a small scalability factor based on the lower bounds \eqref{Eqn:T0Geq} and \eqref{Eqn:T1Geq}. Then \eqref{Eqn:Gdelta} and \eqref{Eqn:Rdelta} yield a common control cycle length
		\begin{equation}
			U_n \equiv  U(\delta) = \frac{(1 + \delta (\lambda_0 + \lambda_{\max})) (Y + O)}{1 - \lambda_0 - \lambda_{\max}}. \label{Eqn:Udelta}
		\end{equation}

		The next proposition plays an instrumental role in establishing the optimality of the greenwave control policy.

		\begin{proposition}
			\label{greenwavecontrol}
			The long-term average queue length under the greenwave control policy is equal to
			\begin{eqnarray}
				&& \hspace{-0.4in} \Phi_1(\delta) = \frac{\lambda_0 (R(\delta) + Y + O)^2}{2 (1 - \lambda_0) U(\delta)}  \nonumber \\
				&& \hspace{-0.4in} \overset{\text{\eqref{Eqn:Rdelta} \eqref{Eqn:Udelta}}}{=} \frac{\lambda_0 (1 - \lambda_0 + \delta \lambda_{\max})^2 (Y + O)}{2 (1 - \lambda_0) (1 - \lambda_0 - \lambda_{\max}) [1 + \delta (\lambda_0 + \lambda_{\max})]},  \\
				&& \hspace{-0.4in} \Psi_n(\delta) = \frac{\lambda_n (G(\delta) + Y + O)^2}{2 (1 - \lambda_n) U(\delta)}  \nonumber \\
				&& \hspace{-0.4in} \overset{\text{\eqref{Eqn:Gdelta} \eqref{Eqn:Udelta}}}{=} \frac{\lambda_n (1 + \delta \lambda_0 - \lambda_{\max})^2 (Y + O)} {(1 - \lambda_n) (1 - \lambda_0 - \lambda_{\max}) [1 + \delta (\lambda_0 + \lambda_{\max})]} ,
			\end{eqnarray}
			for $n \in \{ 1, ..., N \}$, and $\Phi_n(\delta) = 0$ for all $n \geq 2$.
		\end{proposition}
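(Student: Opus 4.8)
The plan is to exploit the fact that, under the greenwave policy, every node runs the \emph{same} periodic schedule, so each queue reduces to a one--dimensional fluid queue fed by a periodic on/off server, whose long--term average is a routine area--over--period computation; the only genuinely new phenomenon is that the downstream avenue queues stay identically zero.

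\textbf{Step 1 (stability check).} First I would verify that the durations $G(\delta),R(\delta),U(\delta)$ of \eqref{Eqn:Gdelta}--\eqref{Eqn:Udelta} satisfy the stability conditions \eqref{Eqn:StabilityCondition} for every $\delta\ge 0$: a direct substitution gives $G(\delta)/U(\delta)=\lambda_0(1+\delta)/(1+\delta(\lambda_0+\lambda_{\max}))\ge\lambda_0$ and $R(\delta)/U(\delta)=\lambda_{\max}(1+\delta)/(1+\delta(\lambda_0+\lambda_{\max}))\ge\lambda_{\max}\ge\lambda_n$, both using $\lambda_0+\lambda_{\max}<1$. Equivalently, in the forms \eqref{stab2}, $G(\delta)\ge\frac{\lambda_0}{1-\lambda_0}(R(\delta)+Y+O)$ and $R(\delta)\ge\frac{\lambda_n}{1-\lambda_n}(G(\delta)+Y+O)$. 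These are exactly the inequalities guaranteeing that the avenue queue at node~$1$ drains to zero within each green subphase, and the cross queue at node~$n$ within each red subphase.

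\textbf{Step 2 (compute $\Phi_1(\delta)$).} Under the greenwave policy the supremum defining $\Phi_1(t)$ (Remark~\ref{Remark:Phi1t} and Lemma~\ref{Lemma:Phi1tGeq}) is attained at the start of the current control cycle (or is $0$), so $\Phi_1(t)$ is exactly the periodic sawtooth: over the non--green interval of length $\overline{G}_1=R(\delta)+Y+O$ it rises at rate $\lambda_0$ to the peak $\lambda_0\overline{G}_1$; over the following green interval it falls at rate $1-\lambda_0$, reaching $0$ after time $\frac{\lambda_0\overline{G}_1}{1-\lambda_0}\le G(\delta)$ by Step~1, and then stays at $0$. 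The integral of this profile over one cycle is $\tfrac12\lambda_0\overline{G}_1^2+\tfrac{\lambda_0^2\overline{G}_1^2}{2(1-\lambda_0)}=\tfrac{\lambda_0\overline{G}_1^2}{2(1-\lambda_0)}$ — precisely the per--cycle quantity already computed inside the proof of Proposition~\ref{Proposition:QueueLengthLowerBound}, now holding with equality. Dividing by the cycle length $U(\delta)$ yields $\Phi_1(\delta)=\frac{\lambda_0(R(\delta)+Y+O)^2}{2(1-\lambda_0)U(\delta)}$, and substituting \eqref{Eqn:Rdelta}--\eqref{Eqn:Udelta} together with the simplification $R(\delta)+Y+O=(Y+O)\frac{1-\lambda_0+\delta\lambda_{\max}}{1-\lambda_0-\lambda_{\max}}$ produces the displayed closed form.

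\textbf{Step 3 ($\Psi_n(\delta)$ and $\Phi_n(\delta)=0$ for $n\ge2$).} For $\Psi_n$ I would run the mirror argument with green and red interchanged and $\lambda_0$ replaced by $\lambda_n$: the cross queue at node~$n$ is the sawtooth with non--red interval $G(\delta)+Y+O$ and drain rate $1-\lambda_n$, giving $\Psi_n(\delta)=\frac{\lambda_n(G(\delta)+Y+O)^2}{2(1-\lambda_n)U(\delta)}$; substituting \eqref{Eqn:Gdelta},\eqref{Eqn:Udelta} with $G(\delta)+Y+O=(Y+O)\frac{1+\delta\lambda_0-\lambda_{\max}}{1-\lambda_0-\lambda_{\max}}$ gives the stated expression. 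For $\Phi_n(\delta)=0$, I would induct on $n$: node~$1$'s outflow rate $\eta_1(t)$ is $0$ on its non--green phase, equals $1$ during the initial $\frac{\lambda_0\overline{G}_1}{1-\lambda_0}$ of its green phase while its queue is backed up, and equals $\lambda_0$ afterwards; thus $\eta_1(t)\le1$ always and $\eta_1(t)>0$ only while node~$2$ is also green. Since node~$2$ starts empty and receives no inflow on its non--green phase, and on its green phase receives inflow at rate $\eta_1(t)\le1$ which it can pass at unit rate, its avenue queue never becomes positive, whence $\eta_2(\cdot)$ has the same $0$/$\le1$ structure, and the induction propagates to all $n\ge2$.

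\textbf{Main obstacle.} The delicate part is the bookkeeping in Step~3: one must track the piecewise--constant rate $\eta_{n-1}(\cdot)$ across the green subphase (unit rate while the accumulated avenue queue drains, then $\lambda_0$) and confirm this inflow is absorbed at node~$n$ with \emph{no} accumulation — i.e.\ that the cascade keeps each downstream avenue queue identically zero rather than merely bounded. A secondary point is justifying that the $\ge$ of Remark~\ref{Remark:Phi1t}/Lemma~\ref{Lemma:Phi1tGeq} becomes an equality here, which holds because the synchronized schedule makes the relevant supremum attained exactly at cycle boundaries.
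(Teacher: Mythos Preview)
Your proposal is correct and follows essentially the same approach as the paper's proof: verify the stability inequalities for the chosen $G(\delta),R(\delta)$, compute the periodic sawtooth area over one cycle for $\Phi_1$ and $\Psi_n$, and observe that synchronized green periods keep the downstream avenue queues identically zero. The paper adds a short transient argument (showing $G(\delta)-\lambda_0 U(\delta)=\lambda_0\delta(Y+O)>0$ so that from \emph{any} initial state the queues eventually empty at the end of some green/red period) and dispatches $\Phi_n=0$ for $n\ge2$ in one line (inflow only during green, at rate $\le1$, served at unit rate) rather than by explicit induction on $\eta_{n-1}$, but the core computation is identical to yours.
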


		\begin{proof}

			Note that $\{ \Phi_n \}_{n \in \{ 1, ..., N \}}$ increases by $\lambda_0 U(\delta)$  over the course of a cycle, and must have decreased by $G(\delta)$ over the preceding cycle if it is strictly positive at the end of a green period. Thus, if $\{ \Phi_n \}_{n \in \{ 1, ..., N \}}$ is strictly positive at the end of a green period, it must decrease at the end of the preceding green period by
			\begin{equation}
				\begin{split}
					&\Delta_{\Phi_n} = G(\delta) - \lambda_0 U(\delta)  \\
					&\overset{\text{\eqref{Eqn:Gdelta} \eqref{Eqn:Udelta}}}{=} \frac{\lambda_0 (1 + \delta) (Y + O)}{1 - \lambda_0 - \lambda_{\max}} - \frac{\lambda_0 [1 + \delta (\lambda_0 + \lambda_{\max})] (Y + O)}{1 - \lambda_0 - \lambda_{\max}}  \\
					&= \lambda_0 \delta (Y + O)  \\
					&> 0.
				\end{split}
			\end{equation}

			We conclude that for any initial state $\{ \Phi_n \}_{n \in \{ 1, ..., N \}}$ must eventually become 0 at the end of some green period, say at time~$t_0$. Now observe that $\{ \Phi_n \}_{n \in \{ 2, ..., N \}}$ can only receive traffic (at most at unit rate) during a green period, but will also serve traffic at unit rate during a green period, and hence must remain zero from that point onward.

			$\Phi_1$ increases at rate $\lambda_0$ from time $t_0$ to time $t_0 + R(\delta) + Y + O$, and decreases at rate $1 - \lambda_0$ from time $t_0 + R(\delta) + Y + O$ to time $t_0 + R(\delta) + Y + O + \frac{\lambda_0}{1 - \lambda_0} \left( R(\delta) + Y + O \right) = t_0 + \frac{1}{1 - \lambda_0} \left( R(\delta) + Y + O \right) < t_0 +  U(\delta)$, to become zero at that point and remain zero until $t_0 +  U(\delta)$. Thus
			\begin{equation} \nonumber
			\begin{split}
				& \Phi_1(t_0 + t) = \\
				& \left\{ \begin{array}{ll} \lambda_0 t, & \text{if}~t \in [0, R(\delta) + Y + O], \\
				-(1 - \lambda_0)t + R(\delta) & \\
				~~+ Y + O,  & \text{if}~t \in \left[1, \frac{1}{1 - \lambda_0} (R(\delta) + Y + O) \right], \\
				0, & \text{if}~t \in \left[ \frac{1}{1 - \lambda_0} (G(\delta) + Y + O),  U(\delta) \right].
				\end{array}
				\right.
			\end{split}
			\end{equation}
			We deduce that
			\begin{equation}
				\begin{split}
					& \Phi_1(\delta) = \lim_{T \to \infty} \frac{1}{T} \int_0^T \Phi_1(t) dt = \frac{1}{U(\delta)} \int_{t = t_0}^{t_0 + U(\delta)} \Phi_1(t) dt \\
					& = \frac{\lambda_0 (R(\delta) + Y + O)^2}{2 (1 - \lambda_0) U(\delta)}  \\
					& \overset{\text{\eqref{Eqn:Rdelta} \eqref{Eqn:Udelta}}}{=} \frac{\lambda_0 (1 - \lambda_0 + \delta \lambda_{\max})^2 (Y + O)}{2 (1 - \lambda_0) (1 - \lambda_0 - \lambda_{\max}) [1 + \delta (\lambda_0 + \lambda_{\max})]}.
				\end{split}
			\end{equation}

			\begin{eqnarray}
				\Phi_1(0) = \frac{\lambda_0 (1 - \lambda_0) (Y + O)} {2 (1 - \lambda_0 - \lambda_{\max})},
			\end{eqnarray}
			which has another form if $\lambda_n = \lambda_1$ for $n \in \{ 1, ..., N \}$
			\begin{eqnarray}
				\Phi_1(0) = \frac{\lambda_0 (1 - \lambda_0) (Y + O)} {2 (1 - \lambda_0 - \lambda_1)}.  \label{Eqn:Phi1Delta0}
			\end{eqnarray}

			We also obtain that $\{ \Psi_n \}_{n \in \{ 1, ..., N \}}$ increases by $\lambda_n [  U(\delta) - R(\delta)]$ over the course of a cycle, and must have decreased by $R(\delta)$ over the preceding cycle if it is strictly positive at the end of a red period.
			\begin{eqnarray}
				&& \hspace{-0.25in} \Delta_{\Psi_n} = R(\delta) - \lambda_n U(\delta)  \nonumber \\
				&& \hspace{-0.25in} \overset{\eqref{Eqn:Rdelta} \eqref{Eqn:Udelta}}{=}  \! \frac{\lambda_{\max} (1 + \delta) (Y + O)}{1 - \lambda_0 - \lambda_{\max}} \! - \! \frac{\lambda_n [1 + \delta (\lambda_0 + \lambda_{\max})] (Y + O)}{1 - \lambda_0 - \lambda_{\max}}   \nonumber \\
				&& \hspace{-0.25in} = \frac{(Y + O) [(1 + \delta) \lambda_{\max} - \lambda_n [1 + \delta (\lambda_0 + \lambda_{\max})] }{1 - \lambda_0 - \lambda_{\max}}  \nonumber \\
				&& \hspace{-0.25in} > 0.
			\end{eqnarray}
			Thus, if $\Psi_n$ is strictly positive at the end of a red period, it must be smaller at the end of the preceding red period.

			We conclude that for any initial state $\Psi_n$ must eventually become 0 at the end of some red period, say at time~$t_1$. Then $\Psi_n$ increases at rate~$\lambda_n$ from time~$t_1$ to time $t_1 + G(\delta) + Y + O$, decreases at rate $1 - \lambda_n$ from time $t_1 + G(\delta) + Y + O$ to time $t_1 + G(\delta) + Y + O + \frac{\lambda_n}{1 - \lambda_n} \left(G(\delta) + Y + O\right) = t_1 + \frac{1}{1 - \lambda_n} \left(G(\delta) + Y + O\right) \leq t_1 +  U(\delta)$, to become zero at that point and remain zero until $t_1 +  U(\delta)$. Thus
			\begin{eqnarray}
				&& \hspace{-0.3in} \Psi_n(t_1 + t) = \nonumber \\
				&& \hspace{-0.3in} \begin{cases}
									   \lambda_n t, & \text{if}~t \in [0, G(\delta) + Y + O],    \\
									   -(1 - \lambda_n) t + G(\delta)  & \\
									   ~~~+ Y + O, & \text{if}~t \in \left[ 1, \frac{1}{1 - \lambda_n} \right] (G(\delta) + Y + O), \\
									   0, & \text{if}~t \in \left[ \frac{1}{1 - \lambda_n} (G(\delta) + Y + O),  U(\delta) \right].\\
				\end{cases}
			\end{eqnarray}

			We deduce that
			\begin{equation}
				\begin{split}
					& \Psi_n(\delta) = \lim_{T \to \infty} \frac{1}{T} \int_0^T \Psi_n(t) dt = \frac{1}{U(\delta)} \int_{t = t_1}^{t_1 +  U(\delta)} \Psi_n(t) dt  \\
					&= \frac{\lambda_n (G(\delta) + Y + O)^2}{2 (1 - \lambda_n) U(\delta)}  \\
					& \overset{\text{\eqref{Eqn:Gdelta} \eqref{Eqn:Udelta}}}{=} \frac{\lambda_n (1 + \delta \lambda_0 - \lambda_{\max})^2 (Y + O)} {2(1 - \lambda_n) (1 - \lambda_0 - \lambda_{\max}) [1 + \delta (\lambda_0 + \lambda_{\max})]}.
				\end{split}
			\end{equation}

			\begin{equation}
				\Psi_n(0) = \frac{\lambda_n (1 - \lambda_{\max})^2 (Y + O)}{2(1 - \lambda_n) (1 - \lambda_0 - \lambda_{\max})},
			\end{equation}
			which has another form if $\lambda_n = \lambda_1$ for $n \in \{ 1, ..., N \}$
			\begin{equation}
				\Psi_n(0) = \frac{\lambda_1 (1 - \lambda_1) (Y + O)}{2 (1 - \lambda_0 - \lambda_1)}.  \label{Eqn:PsinDelta0}
			\end{equation}

		\end{proof}

		We now use the propositions in \ref{Subsec:Optimality} to prove the optimality of the ``greenwave" control policy in a scenario with uniform cross traffic, i.e., $\lambda_n \equiv \lambda_1 = \lambda_{\max}$ for all $n \in \{ 1, ..., N \}$. We take one case as an example, i.e., $\lambda_0 + 2 \lambda_1 \geq 1$, $2 \lambda_0 + \lambda_1 \geq 1$, and then we obtain
		\begin{eqnarray}
			&& \Phi_1(\delta) = \frac{\lambda_0 (1 - \lambda_0 + \delta \lambda_1)^2 (Y + O)^2}{2 (1 - \lambda_0) (1 - \lambda_0 - \lambda_1)^2} ,  \nonumber \\
			&& \Psi_n(\delta) = \frac{\lambda_1 (1 + \lambda_0 \delta - \lambda_1)^2 (Y + O)^2}{2 (1 - \lambda_1) (1 - \lambda_0 - \lambda_1)^2}, \nonumber
		\end{eqnarray}
		where $n \in \{ 1, ..., N \}$. Also, $p_1 = \frac{\lambda_0}{1 - \lambda_0 - \lambda_1}$, and $q_1 = \frac{\lambda_1}{1 - \lambda_0 - \lambda_1}$.

		The lower bounds in \eqref{Eqn:Phi1Geq} and \eqref{Eqn:PsinGeq} reduce to the following
		\begin{eqnarray}
			&& \Phi_1 \geq \frac{\lambda_0 (1 - \lambda_1)^2 (Y + O)}{2 (1 - \lambda_0) (1 - \lambda_0 - \lambda_1)}, \label{Eqn:Phi1GeqA} \\
			&& \Psi_n \geq \frac{\lambda_1 (1 - \lambda_0)^2 (Y + O)}{2 (1 - \lambda_1) (1 - \lambda_0 - \lambda_1)}, \label{Eqn:PsinGeqA}
		\end{eqnarray}
		which are strict unless $G_1 = \frac{\lambda_0 (Y + O)}{1 - \lambda_0 - \lambda_1}$ and $R_n = \frac{\lambda_1 (Y + O)}{1 - \lambda_0 - \lambda_1}$, respectively.



		It is easily verified that, as $\delta \downarrow 0$, the values of $\Phi_1(\delta)$ and $\Psi_n(\delta)$, $n \in \{ 1, ..., N \}$, approach the absolute lower bounds in \eqref{Eqn:Phi1GeqA} and \eqref{Eqn:PsinGeqA}, respectively. The optimality of the ``greenwave" control policy, in the sense that no other control policy can achieve lower values for any $\Phi_n$ or $\Psi_n$ (while maintaining stability). In fact, the greenwave control policy is essentially the unique optimal policy in the scenario under consideration.

		Indeed, the optimality immediately forces $G_n = G(0) = \frac{\lambda_0 (Y + O)}{1 - \lambda_0 - \lambda_1}$, $n \in \{ 1, ..., N \}$, and $R_1 = R(0) = \frac{\lambda_1 (Y + O)}{1 - \lambda_0 - \lambda_1}$, because otherwise the inequations in \eqref{Eqn:Phi1GeqA} and \eqref{Eqn:PsinGeqA} are strict. Also, $R_n = R(0) = \frac{\lambda_1 (Y + O)}{1 - \lambda_0 - \lambda_1}$ is required for all $n = 2, \dots, N$ for stability. Now, we suppose that $R_n > R(0)$ for some $n = 2, \dots, N$, or that the non-green periods do not strictly coincide at all the nodes, and let $W_n$ be the average elapsed simultaneous non-green period at all nodes $\{ 1, \dots, n - 1 \}$ at the start of a green period at node~$n$. Then there must be at least one node~$n$ for which $W_n > 0$. Now observe that $\Phi_n$ must grow at least at rate~$\lambda_0$ when all the upstream nodes $\{ 1, \dots, n - 1 \}$ are in a green period while node $n$ is not. It may then be shown that $\Phi_n \geq \frac{\lambda_0 W_n^2}{2 (1 - \lambda_0) U_n} > 0$, preventing the optimality. Therefore, the unique optimality of the greenwave control policy is proved.

	\end{appendix}

\end{document}